\def\eqref#1{equation~\ref{#1}}
\def\1{\bm{1}}
\DeclareMathAlphabet{\mathsfit}{\encodingdefault}{\sfdefault}{m}{sl}
\SetMathAlphabet{\mathsfit}{bold}{\encodingdefault}{\sfdefault}{bx}{n}
\newtheorem{lemma}{Lemma}
\newtheorem{theorem}{Theorem}
\newtheorem{definition}{Definition}
\newcommand{\mwem}{\textsc{MWEM}}
\newcommand{\RR}{\mathbb{R}}
\newcommand{\cX}{\mathcal{X}}
\newcommand{\E}{\ensuremath{\mathbb{E}}}
\newcommand\EM{\mathcal{M}_E}
\newcommand{\eps}{\varepsilon}
\newcommand{\syn}{synthetic data player}
\title{Private Post-GAN Boosting}
\author{Marcel Neunhoeffer\\
University of Mannheim\\
\texttt{mneunhoe@mail.uni-mannheim.de} \\
\And
Zhiwei Steven Wu \\
Carnegie Mellon University \\
\texttt{zstevenwu@cmu.edu} \\
\AND
Cynthia Dwork \\
Harvard University \\
\texttt{dwork@seas.harvard.edu}
}
\begin{document}

\maketitle

\begin{abstract}
Differentially private GANs have proven to be a promising approach for generating realistic synthetic data without compromising the privacy of individuals. Due to the privacy-protective noise introduced in the  training, the convergence of GANs becomes even more elusive, which often leads to poor utility in the output generator at the end of training. We propose \emph{Private post-GAN boosting (Private PGB)}, a differentially private method that combines samples produced by the sequence of generators obtained during GAN training to create a high-quality synthetic dataset. To that end, our method leverages the Private Multiplicative Weights method (Hardt and Rothblum, 2010) \iffalse and the discriminator rejection sampling technique (Azadi {\it et al.}, 2019) \fi to reweight generated samples. We evaluate Private PGB on two dimensional toy data, MNIST images, US Census data and a standard machine learning prediction task. Our experiments show that Private PGB improves upon a standard private GAN approach across a collection of quality measures. We also provide a non-private variant of PGB that improves the data quality of standard GAN training.

\end{abstract}

\section{Introduction}
The vast collection of detailed personal data, including everything from medical history to voting records, to GPS traces, to online behavior, promises to enable researchers from many disciplines to conduct insightful data analyses. However, many of these datasets contain sensitive personal information, and there is a growing tension between data analyses and data privacy. To protect the privacy of individual citizens, many organizations, including Google~\citep{ErlingssonPK14}, Microsoft~\citep{DingKY17}, Apple~\citep{AppleDP17}, and more recently the 2020 US Census~\citep{Abowd18}, have adopted \emph{differential privacy}~\citep{dwork2006calibrating} as a mathematically rigorous privacy measure. 
%While the recent wave of deployment is encouraging, many researchers who are not privacy experts may have difficulty 
However, working with noisy statistics released under differential privacy requires training.

A natural and promising approach to tackle this challenge is to release \emph{differentially private synthetic data}---a privatized version of the dataset that consists of fake data records and that approximates the real dataset on important statistical properties of interest. Since they already satisfy differential privacy, synthetic data enable researchers to interact with the data freely and to perform the same analyses even without expertise in differential privacy. A recent line of work \citep{medical,rmsprop_DPGAN,yoon2018pategan} studies how one can generate synthetic data by incorporating differential privacy into \emph{generative adversarial networks} (GANs) \citep{Goodfellow:2014:GAN:2969033.2969125}. Although GANs provide a powerful framework for synthetic data, they are also notoriously hard to train and privacy constraint imposes even more difficulty. Due to the added noise in the private gradient updates, it is often difficult to reach convergence with private training.

In this paper, we study how to improve the quality of the synthetic data produced by private GANs. Unlike much of the prior work that focuses on fine-tuning of network architectures and training techniques, we propose \emph{Private post-GAN boosting} (Private PGB)---a differentially private method that boosts the quality of the generated samples after the training of a GAN. Our method can be viewed as a simple and practical amplification scheme that improves the distribution from any existing black-box GAN training method -- private or not. We take inspiration from an empirical observation in \cite{medical} that even though the generator distribution at the end of the private training may be a poor approximation to the data distribution (due to e.g. mode collapse), there may exist a high-quality mixture distribution that is given by several generators over different training epochs. PGB is a principled method for finding such a mixture at a moderate privacy cost and without any modification of the GAN training procedure. 

To derive PGB, we first formulate a two-player zero-sum game, called \emph{post-GAN} zero-sum game, between a \emph{synthetic data} player, who chooses a distribution over generated samples over training epochs to emulate the real dataset, and a \emph{distinguisher} player, who tries to distinguish generated samples from real samples with the set of discriminators over training epochs. We show that under a ``support coverage'' assumption the synthetic data player's mixed strategy (given by a distribution over the generated samples) at an equilibrium can successfully ``fool'' the distinguisher--that is, no mixture of discriminators can distinguish the real versus fake examples better than random guessing. While the strict assumption does not always hold in practice, we demonstrate empirically that the synthetic data player's equilibrium mixture consistently improves the GAN distribution.

The Private PGB method then privately computes an approximate equilibrium in the game. The algorithm can be viewed as a computationally efficient variant of \mwem~\citep{HR10, HardtLM12}, which is an inefficient query release algorithm with near-optimal sample complexity. Since \mwem ~maintains a distribution over exponentially many ``experts'' (the set of all possible records in the data domain), it runs in time exponential in the dimension of the data. In contrast, we rely on private GAN to reduce the support to only contain the set of privately generated samples, which makes PGB tractable even for high-dimensional data.

We also provide an extension of the PGB method by incorporating the technique of \emph{discriminator rejection sampling} \citep{AzadiODGO19,mhgan}. We leverage the fact that the distinguisher's equilibrium strategy, which is a mixture of discriminators, can often accurately predict which samples are unlikely and thus can be used as a rejection sampler. This allows us to further improve the PGB distribution with rejection sampling without any additional privacy cost since differential privacy is preserved under post-processing. Our Private PGB method also has a natural non-private variant, which we show improves the GAN training without privacy constraints.

We empirically evaluate both the Private and Non-Private PGB methods on several tasks. To visualize the effects of our methods, we first evaluate our methods on a two-dimensional toy dataset with samples drawn from a mixture of 25 Gaussian distributions. We define a relevant quality score function and show that the both Private and Non-Private PGB methods improve the score of the samples generated from GAN. We then show that the Non-Private PGB method can also be used to improve the quality of images generated by GANs using the MNIST dataset. Finally, we focus on applications with high relevance for privacy-protection. First we synthesize US Census datasets and demonstrate that the PGB method can improve the generator distribution on several statistical measures, including 3-way marginal distributions and pMSE. Secondly, we evaluate the PGB methods on a dataset with a natural classification task. We train predictive models on samples from Private PGB and samples from a private GAN (without PGB), and show that PGB consistently improves the model accuracy  on real out-of-sample test data.

\textbf{Related work.}
Our PGB method can be viewed as a modular boosting method that can improve on a growing line of work on differentially private GANs~\citep{medical, rmsprop_DPGAN, FrigerioOGD19, TKP20}. To obtain formal privacy guarantees, these algorithms optimize the discriminators in GAN under differential privacy, by using private SGD, RMSprop, or Adam methods, and track the privacy cost using moments accounting \cite{Abadi:2016:DLD:2976749.2978318, Mironov17}. \cite{yoon2018pategan} give a private GAN training method by adapting ideas from the PATE framework~\citep{pate}. 

Our PGB method is inspired by the Private Multiplicative Weigths method \citep{HR10} and its more practical variant \mwem~\citep{HardtLM12}, which answer a large collection of statistical queries by releasing a synthetic dataset. Our work also draws upon two recent techniques~(\cite{mhgan} and~\cite{AzadiODGO19}) that use the discriminator as a rejection sampler to improve the generator distribution. We apply their technique by using the mixture discriminator computed in PGB as the rejection sampler. There has also been work that applies the idea of boosting to (non-private) GANs. For example, \cite{MIXGAN} and \cite{MGAN} propose methods that directly train a mixture of generators and discriminators, and \cite{AdaGAN} proposes AdaGAN that reweighes the real examples during training similarly to what is done in AdaBoost \citep{FS97}. Both of these methods may be hard to make differentially private: they either require substantially more privacy budget to train a collection of discriminators or increase the weights on a subset of examples, which requires more adding more noise when computing private gradients. In contrast, our PGB method boosts the generated samples \emph{post} training and does not make modifications to the GAN training procedure.

\section{Preliminaries}
Let $\cX$ denote the data domain of all possible observations in a given context. Let $p_d$ be a distribution over $\cX$. We say that two datasets $X, X'\in \cX^n$ are adjacent, denoted by $X\sim X'$, if they differ by at most one observation. We will write $p_X$ to denote the empirical distribution over $X$.

\begin{definition}[Differential Privacy (DP)~\citep{dwork2006calibrating}]
A randomized algorithm $\mathcal{A}:\cX^n \rightarrow \mathcal{R}$ with output domain $\mathcal{R}$ (e.g. all generative models) is $(\varepsilon, \delta)$-differentially private (DP) if for all adjacent datasets $X, X' \in \cX^n$ and for all $S  \subseteq \mathcal{R}$:
$P(\mathcal{A}(X) \in {S}) \leq e^{\varepsilon}P(\mathcal{A}(X') \in {S}) + \delta$.\end{definition}

A very nice property of differential privacy is that it is preserved under post-processing.
\begin{lemma}[Post-processing]\label{thm:post-processing}
Let $\mathcal{M}$ be an $(\varepsilon,\delta)$-differentially private algorithm {with output range} $R$ and $f: R \to R'$ be any mapping, the composition $f \circ \mathcal{M}$ {is} $(\varepsilon, \delta)${-differentially private.}
\end{lemma}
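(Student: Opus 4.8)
The plan is to verify the definition of $(\varepsilon,\delta)$-DP directly for the composed mechanism $f \circ \mathcal{M}$, reducing every event about its output in $R'$ to an event about $\mathcal{M}$'s output in $R$, where the hypothesis that $\mathcal{M}$ is $(\varepsilon,\delta)$-DP can be invoked verbatim.

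First I would treat the case that $f$ is a deterministic (measurable) map. Fix adjacent datasets $X \sim X'$ and an arbitrary measurable set $T \subseteq R'$. The event $\{f(\mathcal{M}(X)) \in T\}$ is exactly the event $\{\mathcal{M}(X) \in f^{-1}(T)\}$, where $f^{-1}(T) = \{r \in R : f(r) \in T\} \subseteq R$. Applying the DP guarantee of $\mathcal{M}$ with the set $S = f^{-1}(T)$ gives
\begin{align*}
P\big(f(\mathcal{M}(X)) \in T\big) &= P\big(\mathcal{M}(X) \in f^{-1}(T)\big) \\
&\le e^{\varepsilon}\, P\big(\mathcal{M}(X') \in f^{-1}(T)\big) + \delta = e^{\varepsilon}\, P\big(f(\mathcal{M}(X')) \in T\big) + \delta,
\end{align*}
which is exactly the defining inequality. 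Since $X \sim X'$ and $T$ were arbitrary, $f \circ \mathcal{M}$ is $(\varepsilon,\delta)$-DP.

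Next I would handle a randomized post-processing map $f$ (the case actually needed later, e.g. for the discriminator rejection sampler). Represent $f$ as a mixture of deterministic maps: there is a probability space $(\Omega,\mathcal{F},\mu)$, independent of the internal randomness of $\mathcal{M}$, and a family $\{f_\omega : R \to R'\}_{\omega\in\Omega}$ of deterministic measurable maps such that $f(r)$ is distributed as $f_\omega(r)$ for $\omega\sim\mu$. For each fixed $\omega$, the deterministic case above yields $P(f_\omega(\mathcal{M}(X)) \in T) \le e^{\varepsilon} P(f_\omega(\mathcal{M}(X')) \in T) + \delta$. Integrating this inequality over $\omega\sim\mu$, and using independence of $\omega$ from $\mathcal{M}$ to write $P(f(\mathcal{M}(X))\in T) = \int_\Omega P(f_\omega(\mathcal{M}(X))\in T)\, d\mu(\omega)$ (Fubini/Tonelli), gives $P(f(\mathcal{M}(X))\in T) \le e^{\varepsilon} P(f(\mathcal{M}(X'))\in T) + \delta$, completing the proof.

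The only real obstacle is measure-theoretic bookkeeping rather than mathematical content: one must ensure $f^{-1}(T)$ lies in the $\sigma$-algebra on $R$ for which the DP hypothesis on $\mathcal{M}$ is stated, and one must justify the representation of a randomized $f$ as a mixture of measurable deterministic maps together with the interchange of integration and probability. For the output ranges relevant here (Euclidean spaces, spaces of generative models), these are standard, so I would state them briefly and not belabor them.
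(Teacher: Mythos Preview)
Your argument is correct and is precisely the standard proof of closure under post-processing: pull back the output event through $f$, apply the DP inequality for $\mathcal{M}$ to the preimage, and average over the auxiliary randomness of $f$ in the randomized case. There is nothing to compare against here, however, because the paper does not supply its own proof of this lemma; it is stated as a known fact (the result is classical, due to Dwork, McSherry, Nissim, and Smith) and then invoked without argument. Your write-up would serve perfectly well as the missing justification.
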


As a result, any subsequent analyses conducted on DP synthetic data also satisfy DP.

The \emph{exponential mechanism}~\citep{MT} is a  private mechanism for selecting among the best of a discrete set of alternatives $\mathcal{R}$, where ``best'' is defined by a quality function $q \colon \cX^n \times  \mathcal{R}\rightarrow \mathcal{R}$ that measures the quality of the result $r$ for the dataset $X$. The sensitivity of the quality score $q$ is defined as 
$\Delta(q) = \max_{r \in \mathcal{R}}\max_{X\sim X'} |q(X,r) - q(X', r)|$.
Then given a quality score $q$ and privacy parameter $\varepsilon$, the exponential mechanism $\EM(q, \varepsilon, X)$ simply samples a random alternative from the range $\mathcal{R}$ such that the probability of selecting each $r$ is proportional to $\exp(\varepsilon q(X, r) / (2 \Delta(q)))$.

\subsection{Differentially Private GAN}
The framework of \emph{generative adversarial networks (GANs)}~\citep{Goodfellow:2014:GAN:2969033.2969125} consists of two types of neural networks: \emph{generators} and \emph{discriminators}. A generator $G$ is a function that maps random vectors $z\in Z$ drawn from a prior distribution $p_z$ to a sample $G(z) \in \cX$. A discriminator $D$ takes an observation $x\in \cX$ as input and computes a probability $D(x)$ that the observation is real. Each observation is either drawn from the underlying distribution $p_d$ or the induced distribution $p_g$ from a generator. The training of GAN involves solving the following joint optimization over the discriminator and generator:
\begin{align*}
\min_G \max_D \; \E_{x\sim p_X}[f(D(x))] + \E_{z\sim p_z}[f(1 - D(G(z)))]
\end{align*}
where $f\colon [0,1]\rightarrow \RR$ is a monotone function. For example, in standard GAN, $f(a) = \log a$, and in Wasserstein GAN~\citep{ArjovskyCB17}, $f(a) = a$. The standard (non-private) algorithm iterates between optimizing the parameters of the discriminator and the generator based on the loss functions:
\begin{align*}
    L_D = -\E_{x\sim p_X}[f(D(x))] - \E_{z\sim p_z}[f(1 - D(G(z)))],\quad
    L_G  =  \E_{z\sim p_z}[f(1 - D(G(z)))]
\end{align*}
The private algorithm for training GAN also performs the same alternating optimization, but it optimizes the discriminator under differential privacy while keeping the generator optimization the same. In general, the training proceeds over epochs $\tau = 1, \ldots , N$,  and at the end of each epoch $\tau$ the algorithm obtains a discriminator $D_\tau$ and a generator $G_\tau$ by optimizing the loss functions respectively. In \cite{medical, rmsprop_DPGAN},  the private optimization on the discriminators is done by running the private SGD method~\cite{Abadi:2016:DLD:2976749.2978318} or its variants. \cite{yoon2018pategan} performs the private optimization by incorporating the PATE framework~\cite{pate}. For all of these private GAN methods, the entire sequence of discriminators $\{D_1, \ldots , D_N\}$ satisfies  privacy, and thus the sequence of generators $\{G_1, \ldots , G_N\}$ is also private since they can be viewed as post-processing of the discriminators. Our PGB method is  agnostic to the exact private GAN training methods.

\section{Private Post-GAN Boosting}
The noisy gradient updates impede convergence of the differentially private GAN training algorithm, and the generator obtained in the final epoch of the training procedure may not yield a good approximation to the data distribution.
Nonetheless, empirical evidence has shown that a mixture over the set of generators can be a realistic distribution \citep{medical}. We now provide a principled and practical scheme for computing such a mixture subject to a moderate privacy budget.
Recall that during private GAN training method produces a sequence of generators  $\mathcal{G} = \{G_1, \ldots, G_N\}$ and discriminators $\mathcal{D} = \{D_1, \ldots , D_N\}$. Our boosting method computes a weighted mixture of the $G_j$'s and a weighted mixture of the $D_j$'s that improve upon any individual generator and discriminator. We do that by computing an equilibrium of the following \emph{post-GAN (training)} zero-sum game.

\subsection{Post-GAN Zero-Sum Game.} 

We will first draw $r$ independent samples from each generator $G_j$, and let $B$ be the collection of the $rN$ examples drawn from the set of generators. Consider the following \emph{post-GAN} zero-sum game between a \emph{\syn}, who maintains a distribution $\phi$ over the data in $B$ to imitate the true data distribution $p_X$, and a \emph{distinguisher player}, who uses a mixture of discriminators to tell the two distributions  $\phi$ and $p_X$ apart. This zero-sum game is aligned with the minimax game in the original GAN formulation, but is much more tractable since each player has a finite set of strategies. To define the payoff in the game, we will adapt from the Wasserstein GAN objective since it is less sensitive than the standard GAN objective to the change of any single observation (changing any single real example changes the payoff by at most $1/n$),  rendering it more compatible with privacy tools. Formally, for any $x\in B$ and any discriminator $D_j$, define the payoff as 
\[
U(x, D_j) = \E_{x'\sim p_X}\left[ D_j(x') \right] + (1 -  D_j(x))
\]
For any distribution $\phi$ over $B$, let $U(\phi, \cdot) = \E_{x\sim \phi}[U(x, \cdot)]$, and similarly for any distribution $\psi$ over $\{D_1, \ldots , D_N\}$, we will write $U(\cdot, \psi) = \E_{D\sim \psi}[U(\cdot, D)]$. Intuitively, the payoff function $U$ measures the predictive accuracy of the distinguisher in classifying whether the examples are drawn from the \syn's distribution $\phi$ or the private dataset $X$. Thus, the \syn~aims to minimize $U$ while the distinguisher player aims to maximize $U$.

\begin{definition}
The pair $(\overline{D}, \overline{\phi})$ is an $\alpha$-approximate equilibrium of the post-GAN game if
\begin{align}
\max_{D_j\in \mathcal{D}} U(\overline \phi, D_j) \leq U(\overline \phi, \overline D) + \alpha, \quad\mbox{and}\quad
 \min_{\phi \in \Delta(B)} U(\phi, \overline D) \geq U(\overline \phi, \overline D) -\alpha    \label{1}
\end{align}

\end{definition}

By von Neumann's minimax theorem, there exists a value $V$ -- called the \emph{game value} -- such that 
\[
V = \min_{\phi\in \Delta(B)} \max_{j\in [N]} U(\phi, D_j) = \max_{\psi \in \Delta(\mathcal{D})} \min_{x \in B} U(x, \psi) 
\]
The game value corresponds to the payoff value at an exact equilibrium of the game (that is $\alpha=0$). When the set of discriminators cannot predict the real versus fake examples better than random guessing, the game value $V=1$.
We now show that under the assumption 
that the generated samples in $B$ approximately cover the support of the dataset $X$, the distinguisher player cannot distinguish the real and fake distributions much better than by random guessing.

\begin{theorem}\label{support}
Fix a private dataset $X\in (\mathbb{R}^d)^n$. Suppose that for every $x\in X$, there exists $x_b\in B$ such that $\|x- x_b\|_2 \leq \gamma$.
Suppose $\mathcal{D}$ includes a discriminator network $D^{1/2}$ that outputs $1/2$ for all inputs, and assume that all networks in $\mathcal{D}$ are $L$-Lipschitz. Then there exists a distribution $\phi\in \Delta(B)$ such that $(\phi, D^{1/2})$ is a $L\gamma$-approximate equilibrium, and so $1 \leq  V \leq 1 + L\,\gamma$.
\end{theorem}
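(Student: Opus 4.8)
The plan is to construct the required $\phi$ as a pushforward of the empirical distribution $p_X$ onto $B$, and then verify directly the two inequalities in the definition of an $L\gamma$-approximate equilibrium with $\overline D = D^{1/2}$. Concretely, for each $x\in X$ fix a point $\pi(x)\in B$ with $\|x-\pi(x)\|_2\le \gamma$ (which exists by the support-coverage hypothesis), and let $\phi$ be the distribution of $\pi(x)$ when $x\sim p_X$, i.e.\ $\phi$ puts mass $1/n$ on each $\pi(x_i)$, summing multiplicities.

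The key observation is that $D^{1/2}\equiv 1/2$ makes the payoff constant: for every $x\in B$, $U(x,D^{1/2}) = \E_{x'\sim p_X}[1/2] + (1-1/2) = 1$, so $U(\psi',D^{1/2})=1$ for every $\psi'\in\Delta(B)$. Hence the second equilibrium inequality, $\min_{\psi'\in\Delta(B)} U(\psi',\overline D) \ge U(\phi,\overline D) - L\gamma$, holds trivially since both sides equal $1$ (with slack $L\gamma$ to spare). For the first inequality, expand for an arbitrary $D_j\in\mathcal{D}$, using that $\phi$ is the pushforward of $p_X$ under $\pi$:
\[
U(\phi,D_j) \;=\; \E_{x'\sim p_X}[D_j(x')] + 1 - \E_{x\sim\phi}[D_j(x)] \;=\; 1 + \E_{x\sim p_X}\!\big[D_j(x) - D_j(\pi(x))\big].
\]
Now $L$-Lipschitzness of $D_j$ together with $\|x-\pi(x)\|_2\le\gamma$ gives $|D_j(x)-D_j(\pi(x))|\le L\gamma$ pointwise, so $U(\phi,D_j)\le 1+L\gamma = U(\phi,D^{1/2}) + L\gamma$; maximizing over $j$ yields the first condition, so $(\phi,D^{1/2})$ is an $L\gamma$-approximate equilibrium.

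Finally, for the game-value bounds: $V = \min_{\psi'\in\Delta(B)}\max_{j} U(\psi',D_j) \le \max_j U(\phi,D_j) \le 1 + L\gamma$ using the witness $\phi$ just constructed, while $V \ge \min_{\psi'\in\Delta(B)} U(\psi',D^{1/2}) = 1$ since $D^{1/2}\in\mathcal{D}$ (equivalently, from the other side of the minimax identity, $V = \max_\psi \min_x U(x,\psi) \ge \min_x U(x,D^{1/2}) = 1$). There is no serious obstacle in this argument; the only point requiring a little care is recognizing which of the two equilibrium inequalities is binding — the max over discriminators — while the min over synthetic-data strategies is automatic because $D^{1/2}$ flattens the payoff. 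The hypotheses then play transparent roles: support coverage bounds how far $\phi$ must displace mass relative to $p_X$, Lipschitzness converts that displacement into a payoff gap of at most $L\gamma$, and $D^{1/2}$ simultaneously pins the game value near $1$ and serves as the distinguisher's equilibrium strategy.
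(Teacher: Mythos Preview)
Your proof is correct and follows essentially the same approach as the paper: you construct $\phi$ as the empirical distribution over the $\gamma$-approximations of $X$ in $B$ (the paper calls this $p_{X_B}$), use Lipschitzness to bound $U(\phi,D_j)$ within $L\gamma$ of $1$, and use the fact that $D^{1/2}$ makes the payoff identically $1$ to handle the other inequality. Your write-up is in fact more explicit than the paper's, since you spell out both equilibrium inequalities separately and derive the bounds $1\le V\le 1+L\gamma$ in detail, whereas the paper leaves those as immediate consequences.
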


We defer the proof to the appendix. While the support coverage assumption is strong, we show empirically the synthetic data player's mixture distribution in an approximate equilibrium improves on the distribution given by the last generator $G_N$ even when the assumption does not hold. We now provide a method for computing an approximate equilibrium of the game.

\subsection{Boosting via Equilibrium Computation.} 
Our post-GAN boosting (PGB) method computes an approximate equilibrium of the post-GAN zero-sum game by simulating the so-called {\em no-regret dynamics}. Over $T$ rounds the \syn~maintains a sequence of distributions $\phi^1, \ldots , \phi^T$ over the set $B$, and the distinguisher plays a sequence of discriminators $D^1, \ldots , D^T$. At each round $t$, the distinguisher first selects a discriminator $D$ using the exponential mechanism $\EM$ with the payoff $U(\phi^t, \cdot)$ as the score function. This will find an accurate discriminator $D^t$ against the current synthetic distribution $\phi^t$, so that the \syn{} can improve the distribution. 
Then the \syn~updates its distribution to $\phi^t$ based on an online no-regret learning algorithm--the  multiplicative weights (MW) method \cite{KIVINEN19971}. We can view the set of generated examples in $B$ as a set of ``experts'', and the algorithm maintains a distribution over these experts and, over time, places more weight on the examples that can better ``fool'' the distinguisher player. To do so, MW updates the weight for each $x\in B$ with
\begin{equation}
    \phi^{t+1}(x) \propto \phi^{t} \exp\left(- \eta U(x, D^t) \right) \propto \exp\left(\eta D^t(x) \right)\label{update}    
\end{equation}
where $\eta$ is the learning rate. At the end, the algorithm outputs the average plays $(\overline{D}, \overline{\phi})$ for both players.  We will show these form an approximate equilibrium of the post-GAN zero-sum game \citep{FS97}.

\begin{algorithm}[h]
\caption{Differentially Private Post-GAN Boosting}
\begin{algorithmic} 
\REQUIRE a private dataset $X\in \cX^n$, a synthetic dataset $B$ generated by the set of generators $\mathcal{G}$,  a collection of discriminators $\{D_1, \ldots, D_N\}$, number of iterations $T$, per-round privacy budget $\epsilon_0$, learning rate parameter $\eta$. \\
\STATE\textbf{Initialize} $\phi^1$ to be the uniform distribution over $B$
\FOR{$t = 1, \ldots, T$}
\STATE \textbf{Distinguisher player}: Run exponential mechanism $\EM$ to select a discriminator $D^t$ using quality score $q(X, D_j) = U(\phi^{t}, D_j)$ and privacy parameter $\epsilon_0$.

\STATE \textbf{Synthetic data player}: Multiplicative weights update on the distribution over $B$: for each example $b \in B$:
\[
    \phi^{t+1}(b) \propto \phi^{t}(b) \, \exp(\eta D^t(b))
\]
\ENDFOR

\STATE Let $\overline D$ be the discriminator defined by the uniform average over the set $\{D^1,\ldots , D^T\}$, and $\overline \phi$ be the distribution defined by the average over the set $\{\phi^1 , \ldots , \phi^T\}$
\end{algorithmic}
\end{algorithm}

Note that the synthetic data player's MW update rule does not involve the private dataset, and hence is just a post-processing step of the selected discriminator $D^t$. Thus, the privacy guarantee follows from applying the advacned composition of $T$ runs of the exponential mechanism.\footnote{Note that since the quality scores from the GAN Discriminators are assumed to be probabilities and the score function takes an average over $n$ probabilities (one for each private example), the sensitivity is $\Delta(q) = \frac{1}{n}$.
}

\begin{theorem}[Privacy Guarantee]
For any $\delta\in (0,1)$, the private MW post-amplification algorithm satisfies $(\epsilon, \delta)$-DP with $\epsilon = \sqrt{2 \log(1/\delta) T} \epsilon_0 + T\epsilon_0 (\exp(\epsilon_0) - 1)$.
\end{theorem}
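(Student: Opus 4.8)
The plan is to isolate the steps of the Private PGB algorithm that actually access the private dataset $X$, bound the privacy loss of each, and combine them via advanced composition. Inspecting the algorithm: the initialization of $\phi^1$ depends only on the public synthetic pool $B$; the multiplicative-weights update $\phi^{t+1}(b)\propto\phi^t(b)\exp(\eta D^t(b))$ is a deterministic function of $\phi^t$ and the selected discriminator $D^t$ with no reference to $X$; and the final averaging producing $(\overline D,\overline\phi)$ is a deterministic function of $D^1,\dots,D^T$. By Lemma~\ref{thm:post-processing} none of these contribute privacy cost, so it suffices to bound the privacy of the sequence $(D^1,\dots,D^T)$, whose $t$-th coordinate is the output of one call to the exponential mechanism $\EM(q,\epsilon_0,X)$.

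Next I would verify that each such call is $(\epsilon_0,0)$-DP. At round $t$ the quality score is $q(X,D_j)=U(\phi^t,D_j)=\E_{x'\sim p_X}[D_j(x')]+\E_{x\sim\phi^t}[1-D_j(x)]$; only the first summand depends on $X$, and since $p_X$ is the empirical distribution over $X$ and each $D_j$ takes values in $[0,1]$, that summand equals $\frac{1}{n}\sum_{x'\in X}D_j(x')$, so modifying a single record of $X$ changes $q(\cdot,D_j)$ by at most $1/n$. Hence $\Delta(q)\le 1/n$, \emph{uniformly over the round $t$ and over every realization of $\phi^t$}, and the standard guarantee of the exponential mechanism gives that selecting $D^t$ via $\EM(q,\epsilon_0,X)$ is $\epsilon_0$-DP. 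The step needing care, and the one I expect to be the main obstacle, is that the composition here is \emph{adaptive}: the score function at round $t$ uses $\phi^t$, which depends on the earlier outputs $D^1,\dots,D^{t-1}$. I would resolve this by observing that the map from a fixed prefix $(D^1,\dots,D^{t-1})$ to $\phi^t$ does not involve $X$, so for each fixed prefix the round-$t$ mechanism is genuinely an $(\epsilon_0,0)$-DP mechanism of $X$, and the sensitivity bound $1/n$ holds irrespective of $\phi^t$; this places the analysis squarely within the hypotheses of the adaptive form of the advanced composition theorem.

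Finally I would invoke the advanced composition theorem for differential privacy: for any $\delta\in(0,1)$, the $T$-fold adaptive composition of $(\epsilon_0,0)$-DP mechanisms is $(\epsilon,\delta)$-DP with $\epsilon=\sqrt{2\log(1/\delta)\,T}\,\epsilon_0 + T\epsilon_0(e^{\epsilon_0}-1)$. Applying this to the release of $(D^1,\dots,D^T)$, and then composing with the post-processing map that outputs $(\overline D,\overline\phi)$ (again by Lemma~\ref{thm:post-processing}), shows the Private PGB algorithm is $(\epsilon,\delta)$-DP with exactly the claimed $\epsilon$, which completes the argument. The only remaining routine bookkeeping is to confirm that $\Delta(q)=1/n$ is the scaling used inside $\EM$, so that the per-round cost is $\epsilon_0$ rather than a multiple of it, as already noted in the footnote to the algorithm.
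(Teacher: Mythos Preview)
Your proposal is correct and follows essentially the same approach as the paper: the paper's entire argument is the sentence preceding the theorem, observing that the MW update is a post-processing step of the selected discriminator and that the privacy guarantee then follows from advanced composition of $T$ runs of the exponential mechanism (with the footnote confirming $\Delta(q)=1/n$). Your write-up simply fills in the details the paper leaves implicit, including the verification of the sensitivity bound and the observation that the adaptive dependence of $\phi^t$ on prior outputs is handled because $\phi^t$ is a data-independent function of $D^1,\dots,D^{t-1}$.
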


Note that if the private GAN training algorithm satisfies $(\eps_1, \delta_1)$-DP and the Private PGB method satisfies $(\eps_2, \delta_2)$-DP, then the entire procedure is $(\eps_1 + \eps_2, \delta_1 + \delta_2)$-DP.

We now show that the pair of average plays form an approximate equilibrium of the game. 
\begin{theorem}[Approximate Equilibrium] 
With probability $1 - \beta$, the pair $(\overline{D}, \overline{\phi})$ is an $\alpha$-approximate equilibrium of the post-GAN zero-sum game with 
$\alpha = 4 \eta + \frac{\log |B|}{\eta T} + \frac{2\log(NT/\beta)}{n\eps_0}$.
If $T \geq  n^2$ and $\eta = \frac{1}{2}\sqrt{\log(|B|)/T}$, then  $$\alpha =  O\left(\frac{\log(n N |B|/ \beta)}{n \eps_0}\right)$$
\end{theorem}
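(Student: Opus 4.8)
The proof follows the classical analysis of no-regret dynamics for zero-sum games (à la Freund--Schapire), with the one twist that the distinguisher's play is perturbed by the exponential mechanism. The plan has three parts: (i) a regret bound for the \syn's multiplicative weights updates; (ii) an approximate-best-response guarantee for the distinguisher coming from the utility of $\EM$, made to hold uniformly over all $T$ rounds; and (iii) combining the two to certify the two inequalities in \eqref{1}. For part (i): since $D^t(b)\in[0,1]$ for every $b\in B$, the payoff $U(b,D^t)=\E_{x'\sim p_X}[D^t(x')]+(1-D^t(b))$ lies in $[0,2]$, and the update $\phi^{t+1}(b)\propto\phi^{t}(b)\exp(\eta D^t(b))$ is exactly the multiplicative weights rule on the gain vectors $g^t=(D^t(b))_{b\in B}\in[0,1]^{|B|}$. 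The standard MW regret bound then gives, for every fixed $\phi\in\Delta(B)$,
\[
\frac1T\sum_{t=1}^{T} U(\phi^t,D^t)\;\le\;\frac1T\sum_{t=1}^{T}U(\phi,D^t)+4\eta+\frac{\log|B|}{\eta T}.
\]
By linearity of $U$ in its first argument, $\frac1T\sum_t U(\phi,D^t)=U(\phi,\overline D)$, so taking $\phi$ to be a minimizer we get $\frac1T\sum_t U(\phi^t,D^t)\le\min_{\phi\in\Delta(B)}U(\phi,\overline D)+\rho_{\mathrm{MW}}$ with $\rho_{\mathrm{MW}}=4\eta+\log|B|/(\eta T)$.

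For part (ii): at round $t$ the discriminator $D^t$ is drawn by $\EM$ with quality score $q(X,D_j)=U(\phi^t,D_j)$, which has sensitivity $\Delta(q)=1/n$ because $X$ enters only through $\frac1n\sum_i D_j(x_i)\in[0,1]$ (as in the footnote). The utility guarantee of the exponential mechanism gives, with probability at least $1-\beta/T$,
\[
U(\phi^t,D^t)\;\ge\;\max_{D_j\in\mathcal D}U(\phi^t,D_j)-\frac{2\log(NT/\beta)}{n\eps_0}.
\]
A union bound over $t=1,\dots,T$ makes this hold for all rounds simultaneously with probability at least $1-\beta$; summing over $t$ and using linearity again yields $\frac1T\sum_t U(\phi^t,D^t)\ge\max_{D_j}U(\overline\phi,D_j)-\rho_{\mathrm{EM}}$ with $\rho_{\mathrm{EM}}=2\log(NT/\beta)/(n\eps_0)$.

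For part (iii), write $\bar v=\frac1T\sum_t U(\phi^t,D^t)$. The two bounds sandwich $\bar v$:
\[
\max_{D_j}U(\overline\phi,D_j)-\rho_{\mathrm{EM}}\;\le\;\bar v\;\le\;\min_{\phi}U(\phi,\overline D)+\rho_{\mathrm{MW}}.
\]
Since $\min_{\phi}U(\phi,\overline D)\le U(\overline\phi,\overline D)\le\max_{D_j}U(\overline\phi,D_j)$, this also sandwiches $U(\overline\phi,\overline D)$ between $\bar v-\rho_{\mathrm{MW}}$ and $\bar v+\rho_{\mathrm{EM}}$; rearranging gives $\max_{D_j}U(\overline\phi,D_j)\le U(\overline\phi,\overline D)+\rho_{\mathrm{MW}}+\rho_{\mathrm{EM}}$ and $\min_{\phi}U(\phi,\overline D)\ge U(\overline\phi,\overline D)-\rho_{\mathrm{MW}}-\rho_{\mathrm{EM}}$, i.e.\ $(\overline D,\overline\phi)$ is an $\alpha$-approximate equilibrium with $\alpha=\rho_{\mathrm{MW}}+\rho_{\mathrm{EM}}$, as stated. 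The asymptotic form follows by plugging $\eta=\tfrac12\sqrt{\log|B|/T}$, which makes $\rho_{\mathrm{MW}}=4\sqrt{\log|B|/T}$, then using $T\ge n^2$ to get $\rho_{\mathrm{MW}}\le 4\sqrt{\log|B|}/n\le 4\log|B|/n$, and absorbing $\log T=O(\log n)$ into the logarithm in $\rho_{\mathrm{EM}}$.

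The only things that need care are cosmetic rather than conceptual: first, the constant in $\rho_{\mathrm{MW}}$ and the range over which MW is analyzed — it is cleanest to note that the update only ever sees gains $D^t(b)\in[0,1]$, so the textbook regret bound for gains in $[0,1]$ applies and the stated $4\eta$ is a safe (slightly loose) choice that also covers running MW directly on $U\in[0,2]$. Second, the uniform-over-rounds union bound for the exponential mechanism's utility, which is what costs the extra $\log T$ inside the logarithm and, together with the per-round budget $\eps_0$ in Theorem~3, is what turns the $T$ exponential-mechanism calls into the overall $(\eps,\delta)$ guarantee.
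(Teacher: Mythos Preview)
Your proof is correct and follows essentially the same approach as the paper: bound the \syn's regret via the standard multiplicative-weights guarantee, bound the distinguisher's regret via the utility of the exponential mechanism together with a union bound over the $T$ rounds, and then invoke the Freund--Schapire result that low regret for both players implies an approximate equilibrium for the pair of average plays. The paper simply cites \cite{FS97} for the final combination step, whereas you write out the sandwich argument explicitly in part (iii); your additional observation that the MW update acts on the gains $D^t(b)\in[0,1]$ (so the $b$-independent term $\E_{x'\sim p_X}[D^t(x')]+1$ drops out of the regret) is a nice clarification the paper leaves implicit.
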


We provide a proof sketch here and defer the full proof to the appendix.
By the result of \cite{FS97}, if the two players have low regret in the dynamics, then their average plays form an approximate equilibrium, where the regret of the two players is defined as
$R_{\textrm{syn}} = \sum_{t=1}^T U(\phi^t, D^t) - \min_{b\in B} \sum_{t=1}^T U(b, D^t)$ and $R_{\textrm{dis}} =  \max_{D_j} \sum_{t=1}^T U(\phi^t , D_j) - \sum_{t=1}^T U(\phi^t , D^t)$.
Then the approximate equilibrium guarantee directly follows from bounding $R_{\textrm{syn}}$ with the regret bound of MW and $R_{\textrm{dis}}$ with the approximate optimality of the exponential mechanism.

\paragraph{Non-Private PGB.} The Private PGB method has a natural non-private variant: in each round, instead of drawing from the exponential mechanism, the distinguisher player will simply compute the exact best response:
$D^t = \arg\max_{D_j} U(\phi^t, D_j)$. Then if we set learning rate $\eta = \frac{1}{2}\sqrt{\log(|B|)/T}$ and run for $T = \log(|B|)/\alpha^2$ rounds, the pair $(\overline D, \overline \phi)$ returned is an $\alpha$-approximate equilibrium.

\paragraph{Extension with Discriminator Rejection Sampling.}
The mixture discriminator $\overline{D}$ at the equilibrium provides an accurate predictor on which samples are unlikely. As a result,  we can use $\overline{D}$ to further improve the data distribution $\overline{\phi}$ by the \emph{discriminator rejection sampling} (DRS) technique of \cite{AzadiODGO19}. The DRS scheme in our setting generates a single example as follows: first draw an example $x$ from $\overline \phi$ (the proposal distribution), and then accept $x$ with probability proportional to $\overline D(x) / (1 - \overline{D}(x))$.  Note that the optimal discriminator $D^*$ that distinguishes the distribution $\overline\phi$ from true data distribution $p_d$ will 
accept $x$ with probability proportional to $p_d(x) / p_{\overline \phi}(x) = D^*(x) / (1 - D^*(x))$. Our scheme aims to approximate this ideal rejection sampling by approxinating $D^*$ with the equilibrium strategy $\overline{D}$, whereas prior work uses the last discriminator $D_N$ as an approximation.

\section{Empirical Evaluation}

We empirically evaluate how both the Private and Non-Private PGB methods affect the utility of the generated synthetic data from GANs. We show two appealing advantages of our approach: 1) non-private PGB outperforms the last Generator of GANs, and 2) our approach can significantly improve the synthetic examples generated by a GAN under differential privacy. 

\textbf{Datasets.} We assess our method with a toy dataset drawn from a mixture of 25 Gaussians, which is commonly used to evaluate the quality of GAN \citep{srivastava2017veegan,AzadiODGO19,mhgan} and synthesize MNIST images. We then turn to real datasets from the American Census, and a standard machine learning dataset (Titanic). 
% Finally, we evaluate the performance of machine learning models trained on synthetic data  (with and without privacy) and tested on real out-of-sample data.

\textbf{Privacy budget.} For the tasks with privacy, we set the privacy budget to be the same across all algorithms. Since Private PGB requires additional privacy budget this means that the differentially private GAN training has to be stopped earlier as compared to running only a GAN to achieve the same privacy guarantee. Our principle is to allocate the majority of the privacy budget to the GAN training, and a much smaller budget for our Private PGB method. Throughout we used 80\% to 90\% of the final privacy budget on DP GAN training.\footnote{Our observation is that the DP GAN training is doing the ``heavy lifting''. Providing a good ``basis'' for PGB requires a substantial privacy expenditure in training DP GAN. The privacy budget allocation is a hyperparameter for PGB that could be tuned. In general, the problem of differentially private hyperparameter selection is extremely important and the literature is thin \citep{liu2019private,chaudhuri2013stability}. }

\textbf{Utility measures.}
Utility of synthetic data can be assessed along two dimensions; general utility and specific utility \citep{Snoke2018,arnold2020really}. General utility describes the overall distributional similarity between the real data and synthetic datasets, but does not capture specific use cases of synthetic data. \iffalse{Therefore, specific utility of data analyses carried out on synthetic data has to be taken into account. This could be the similarity of marginal distributions, the performance of regression models or the predictive performance of machine learning models trained on synthetic datasets as compared to the same analysis carried out on the real data.}\fi To assess general utility, we use the propensity score mean squared error (pMSE) measure \citep{Snoke2018} (detailed in the Appendix).
Specific utility of a synthetic dataset depends on the specific use an analyst has in mind. In general, specific utility can be defined as the similarity of results for analyses using synthetic data instead of real data.
For each of the experiments we define specific utility measures that are sensible for the respective example. For the toy dataset of 25 gaussians we look at the number of high quality samples. For the American Census data we compare marginal distributions of the synthetic data to marginal distributions of the true data and look at the similarity of regression results.

\subsection{Evaluation of Non-Private PGB}

\paragraph{Mixture of 25 Gaussians.} We first examine the performance of our approach on a two dimensional dataset with a mixture of 25 multivariate Gaussian distributions, each with a covariance matrix of $0.0025 I$. The left column in Figure~\ref{25gaussians} displays the training data. Each of the 25 clusters consists of $1,000$ observations. The architecture of the GAN is the same across all results.\footnote{A description of the architecture is in the Appendix. The code for the GANs and the PGB algorithm will be made available on GitHub.} To compare the utility of the synthetic datasets with the real data, we inspect the visual quality of the results\iffalse, calculate the pMSE ratio score,\footnote{The pMSE ratio score is the ratio of the pMSE score to its null expectation \citep{Snoke2018}. For perfect synthesis we would expect a pMSE ratio score of 1. Higher values indicate lower general utility.}\fi and calculate the proportion of high quality synthetic examples similar to \citet{AzadiODGO19},\citet{mhgan} and \citet{srivastava2017veegan}.\footnote{Note that the scores in \citet{AzadiODGO19} and \citet{mhgan} do not account for the synthetic data distribution across the 25 modes. We detail our evaluation of high quality examples in the Appendix.}

Visual inspection of the results without privacy (in the top row of Figure~\ref{25gaussians}) shows that our proposed method outperforms the synthetic examples generated by the last Generator of the GAN, as well as the last Generator enhanced with DRS. PGB over the last 100 stored Generators and Discriminators trained for $T = 1,000$ update steps, and the combination of PGB and DRS, visibly improves the results. \iffalse Taking the distribution of synthetic examples after PGB and using it as the proposal distribution for DRS seems to improve the results further.\fi The visual impression is confirmed by the proportion of high quality samples. The data from the last GAN generator have a proportion of $0.904$ high quality samples. The synthetic data after PGB achieves a higher score of $0.918$. The DRS samples have a proportion of $0.826$ high quality samples, and the combination of PGB and DRS a higher proportion of $0.874$ high quality samples.\footnote{The lower scores for the DRS samples are due to the capping penalty in the quality metric. Without the capping penalty the scores are $0.906$ for the last Generator, $0.951$ for PGB , $0.946$ for DRS and $0.972$ for the combination of PGB and DRS.} 

\paragraph{MNIST Data.} We further evaluate the performance of our method on an image generation task with the MNIST dataset. Our results are based on the DCGAN GAN architecture \citep{radford2015unsupervised} with the KL-WGAN loss \citep{song2020bridging}. To evaluate the quality of the generated images we use a metric that is based on the Inception score (IS) \citep{salimans2016improved}, where instead of the Inception Net we use a MNIST Classifier that achieves $99.65 \%$ test accuracy. The theoretical best score of the MNIST IS is 10, and the real test images achieve a score of $9.93$. Without privacy the last GAN Generator achieves a score of $8.41$, using DRS on the last Generator slightly decreases the score to $8.21$, samples with PGB achieve a score of $8.76$, samples with the combination of PGB and DRS achieve a similar score of $8.77$ (all inception scores are calculated on 5,000 samples). Uncurated samples for all methods are included in the Appendix.

\begin{figure}[!h]
  \centering 
  \includegraphics[width = \textwidth]{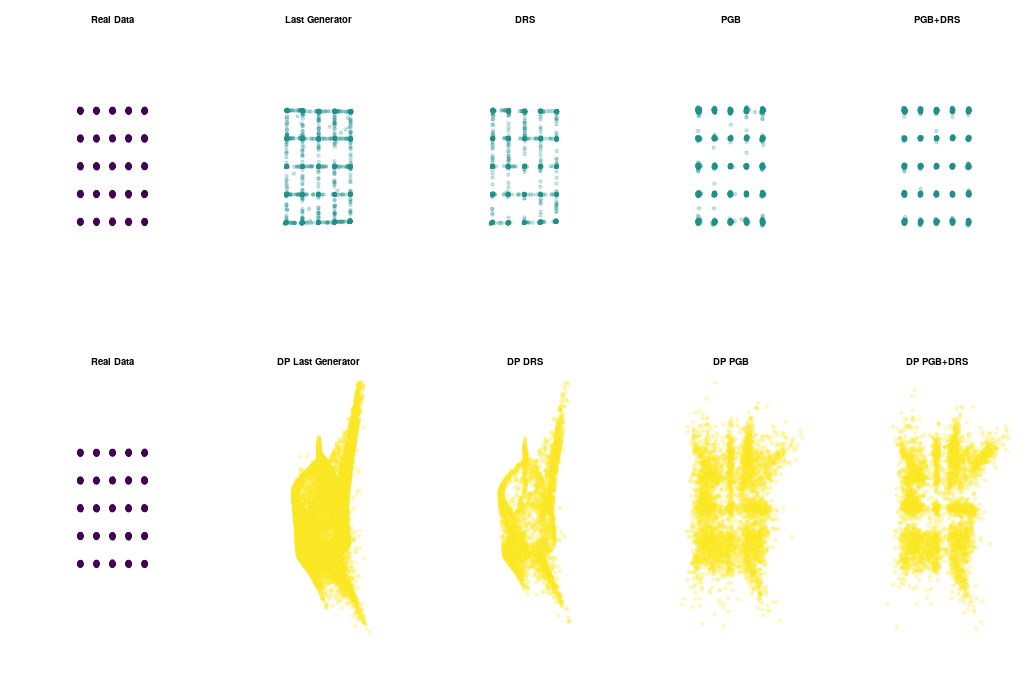}
  \caption{Real samples from 25 multivariate normal distributions, synthetic examples without privacy from a GAN, DRS, Non-Private PGB and the combination of PGB and DRS (top row). Synthetic examples from a GAN with differential privacy, DP DRS, Private PGB and the combination of Private PGB and DRS (bottom row).}
  \label{25gaussians}
\end{figure}

\subsection{Evaluation of Private PGB}

\paragraph{Mixture of 25 Gaussians.} To show how the differentially private version of PGB improves the samples generated from GANs that were trained under differential privacy, we first re-run the experiment with the two-dimensional toy data.\footnote{To achieve DP, we trained the Discriminator with a DP optimizer as implemented in \texttt{tensorflow\_privacy} or the \texttt{opacus} library. We keep track of the values of $\epsilon$ and $\delta$ by using the moments accountant \citep{Abadi:2016:DLD:2976749.2978318, Mironov17}.} Our final value of $\epsilon$ is $1$ and $\delta$ is $\frac{1}{2N}$. For the results with PGB, the GAN training contributes $\epsilon_1 = 0.9$ to the overall $\epsilon$ and the Private PGB algorithm $\epsilon_2 = 0.1$. Again a first visual inspection of the results in Figure~\ref{25gaussians} (in the bottom row) shows that post-processing the results of the last GAN Generator with Private PGB is worthwhile. Private PGB over the last 100 stored Generators and Discriminators trained for $T = 1,000$ update steps, again, visibly improves the results. Again, our visual impression is confirmed by the proportion of high quality samples. The last Generator of the differentially private GAN achieves a proportion of $0.031$ high quality samples. With DRS on top of the last Generator, the samples achieve a quality score of $0.035$. The GAN enhanced with Private PGB achieves a proportion of $0.044$ high quality samples, the combination of Private PGB and DRS achieves a quality score of $0.053$.

\paragraph{MNIST Data.} On the MNIST data, with differential privacy ($\epsilon = 10$, $\delta = \frac{1}{2N}$) the last DP GAN Generator achieves an inception score of $8.07$, using DRS on the last Generator the IS improves to $8.18$. With Private PGB the samples achieve an IS of $8.58$, samples with the combination of Private PGB and DRS achieve the highest IS of $8.66$.\footnote{All inception scores are calculated on $5,000$ samples.} Uncurated samples for all methods are included in the Appendix.

\paragraph{Private Synthetic 1940 American Census Samples.} While the results on the toy dataset are encouraging, the ultimate goal of private synthetic data is to protect the privacy of actual persons in data collections, and to provide useful data to interested analysts. In this section we report the results of synthesizing data from the 1940 American Census. We rely on the public use micro data samples (PUMS) as provided in \cite{ruggles2019ipums}.\footnote{Further experiments using data from the 2010 American Census can be found in the appendix.} For 1940 we synthesize an excerpt of the 1\% sample of all Californians that were at least 18 years old.\footnote{A 1\% sample means that the micro data contains 1\% of the total American (here Californian) population.} Our training sample consists of 39,660 observations and 8 attributes (sex, age, educational attainment, income, race, Hispanic origin, marital status and county). The test set contains another 9,915 observations. Our final value of $\epsilon$ is $1$ and $\delta$ is $\frac{1}{2N} \approx 6.3\times10^{-6}$ (after DP GAN training with $\epsilon_1 = 0.8 $ and PGB with $\epsilon_2 = 0.2$, $\delta_1 = \frac{1}{2N}, \delta_2 = 0$).
The general utility scores as measured by the pMSE ratio score are $2.357$ (DP GAN), $2.313$ (DP DRS), $2.253$ (DP PGB), and $2.445$ (DP PGB+DRS). This indicates that PGB achieves the best general utility. To assess the specific utility of our synthetic census samples we compare one-way marginal distributions to the same marginal distributions in the original data. In panel (A) of Figure \ref{fig2} we show the distribution of race membership. Comparing the synthetic data distributions to the true distribution (in gray), we conclude that PGB, improves upon the last Generator. To underpin the visual impression we calculate the total variation distance between each of the synthetic distributions and the real distribution, the data from the last GAN Generator has a total variation distance of $0.58$, DP DRS of $0.44$, DP PGB of $0.22$ and DP PGB+DRS of $0.13$. Furthermore, we evaluate whether more complex analysis models, such as regression models, trained on synthetic samples could be used to make sensible out-of-sample predictions. Panel (B) of Figure \ref{fig2} shows a parallel coordinate plot to compare the out-of-sample root mean squared error of regression models trained on real data and trained on synthetic data. The lines show the RMSE for predicted income for all linear regression models trained with three independent variables from the set of on the synthetic data generated with Private PGB as compared to the last GAN generator and other post processing methods like DRS.

\begin{figure}[!h]
  \centering 
  \includegraphics[width = 0.9\textwidth]{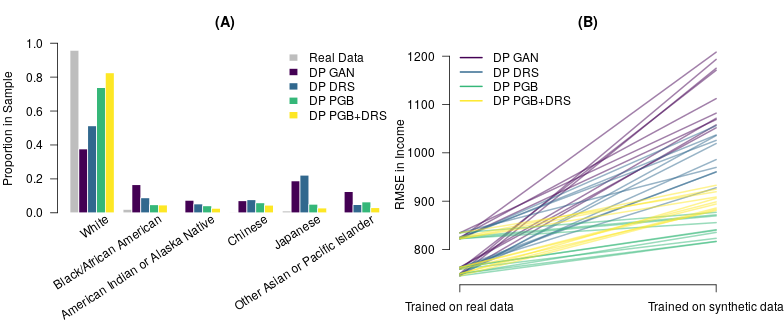}
  \caption{Specific Utility of Synthetic 1940 American Census Data. Panel (A): Distribution of Race Membership in Synthetic Samples. Panel (B): Regression RMSE with Synthetic Samples.}
  \label{fig2}
\end{figure}

\iffalse
\begin{figure}[!h]
  \centering \includegraphics[width = .6\linewidth]{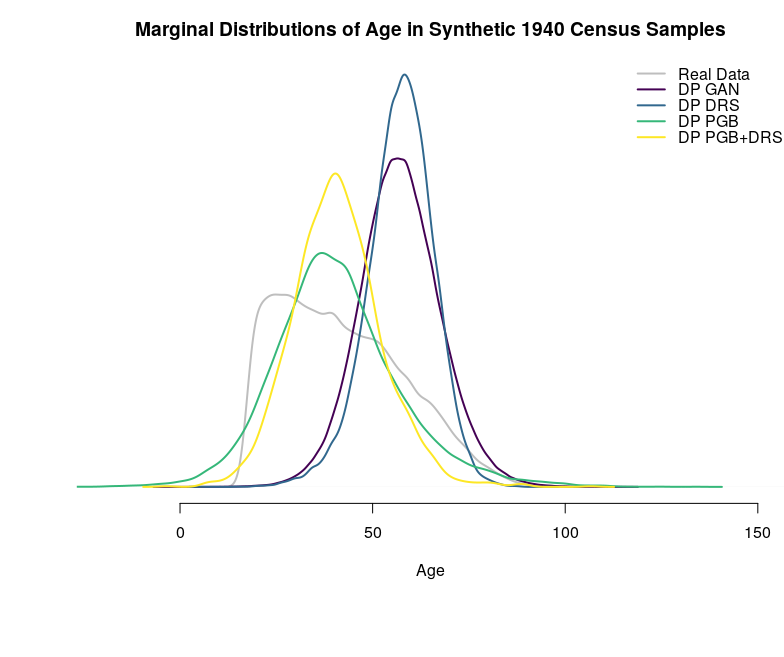}
  \caption{The marginal distribution of age in private synthetic 1940 census data.}
  \label{age}
\end{figure}

% \begin{figure}[!h]
%   \centering \includegraphics[width = .9\columnwidth]{neurips2020/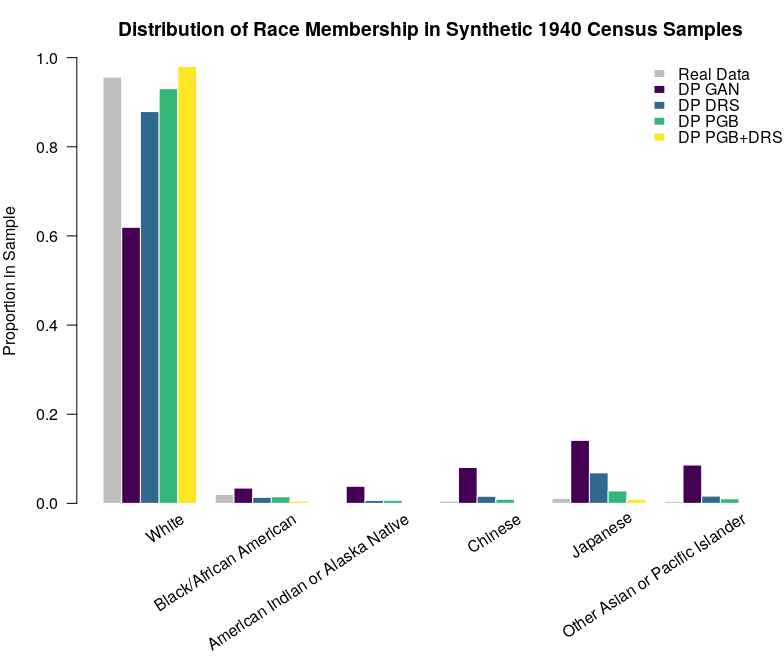}
%   \caption{The distribution of races in private synthetic 1940 census data.}
%   \label{race}
% \end{figure}

% \begin{figure}[!h]
%   \centering \includegraphics[width = .95\columnwidth]{neurips2020/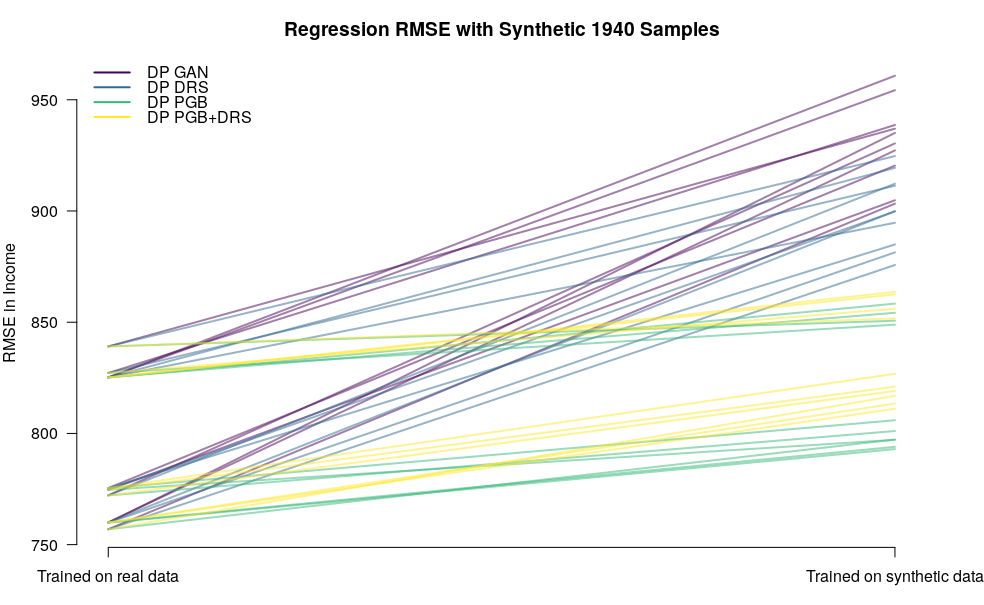}
%   \caption{Private synthetic 1940 census data. Comparison of regression out-of-sample prediction root mean squared errors of income for all combinations of linear regression with three independent variables from the attributes sex, age, educational attainment, race  and marital status.}
%   \label{regressions}
% \end{figure}

\begin{figure}
\centering
\begin{minipage}{.45\textwidth}
  \centering
  \includegraphics[width=1\linewidth]{figures/race_dist.png}
  \captionof{figure}{The distribution of races in private synthetic 1940 census data.}
  \label{fig:test1}
\end{minipage}%
\begin{minipage}{.45\textwidth}
  \centering
  \includegraphics[width=1\linewidth]{figures/regressions-2.png}
  \captionof{figure}{Private synthetic 1940 census data. Comparison of regression out-of-sample prediction root mean squared errors of income for all combinations of linear regression with three independent variables from the attributes sex, age, educational attainment, race  and marital status.}
  \label{fig:test2}
\end{minipage}
\end{figure}
\fi
\paragraph{Machine Learning Prediction with Synthetic Data.} In a final set of experiments we evaluate the performance of machine learning models trained on synthetic data (with and without privacy) and tested on real out-of-sample data. We synthesize the Kaggle Titanic\footnote{\url{https://www.kaggle.com/c/titanic/data}} training set (891 observations of Titanic passengers on 8 attributes) and train three machine learning models (Logistic Regression, Random Forests (RF) \citep{breiman2001random} and XGBoost \citep{chen2016xgboost}) on the synthetic datasets to predict whether someone survived the Titanic catastrophe. We then evaluate the performance on the test set with 418 observations. To address missing values in both the training set and the test set we independently impute values using the MissForest \citep{stekhoven2012missforest} algorithm. For the private synthetic data our final value of $\epsilon$ is $2$ and $\delta$ is $\frac{1}{2N}$ (for PGB this implies DP GAN training with $\epsilon_1 = 1.6 $ and PGB $\epsilon_2 = 0.4$). The models trained on synthetic data generated with our approaches (PGB and PGB+DRS) consistently perform better than models trained on synthetic data from the last generator or DRS -- with or without privacy.\footnote{Table \ref{tab2:} in the appendix summarizes the results in more detail. We present the accuracy, ROC AUC and PR AUC to evaluate the performance.} 

\subsubsection*{Acknowledgments}

 This work began when the authors were at the Simons Institute participating in the ``Data Privacy: Foundations and Applications" program. We thank Thomas Steinke, Adam Smith, Salil Vadhan, and the participants of the DP Tools meeting at Harvard for helpful comments. Marcel Neunhoeffer is supported by the University of Mannheim's Graduate School of Economic and Social Sciences funded by the German Research Foundation. Zhiwei Steven Wu is supported in part by an NSF S\&CC grant \#1952085, a Google Faculty Research Award, and a Mozilla research grant. Cynthia Dwork is supported by the Alfred P. Sloan Foundation, ``Towards Practicing Privacy" and NSF CCF-1763665.

\newpage
\bibliography{iclr2021_conference}

\begin{thebibliography}{40}
\providecommand{\natexlab}[1]{#1}
\providecommand{\url}[1]{\texttt{#1}}
\expandafter\ifx\csname urlstyle\endcsname\relax
  \providecommand{\doi}[1]{doi: #1}\else
  \providecommand{\doi}{doi: \begingroup \urlstyle{rm}\Url}\fi

\bibitem[Abadi et~al.(2016)Abadi, Chu, Goodfellow, McMahan, Mironov, Talwar,
  and Zhang]{Abadi:2016:DLD:2976749.2978318}
Martin Abadi, Andy Chu, Ian Goodfellow, H.~Brendan McMahan, Ilya Mironov, Kunal
  Talwar, and Li~Zhang.
\newblock Deep learning with differential privacy.
\newblock In \emph{Proceedings of the 2016 ACM SIGSAC Conference on Computer
  and Communications Security}, CCS '16, pp.\  308--318, New York, NY, USA,
  2016. ACM.
\newblock ISBN 978-1-4503-4139-4.
\newblock \doi{10.1145/2976749.2978318}.
\newblock URL \url{http://doi.acm.org/10.1145/2976749.2978318}.

\bibitem[Abowd(2018)]{Abowd18}
John~M. Abowd.
\newblock The {U.S.} census bureau adopts differential privacy.
\newblock In \emph{Proceedings of the 24th {ACM} {SIGKDD} International
  Conference on Knowledge Discovery {\&} Data Mining, {KDD} 2018, London, UK,
  August 19-23, 2018}, pp.\  2867, 2018.
\newblock \doi{10.1145/3219819.3226070}.
\newblock URL \url{https://doi.org/10.1145/3219819.3226070}.

\bibitem[Arjovsky et~al.(2017)Arjovsky, Chintala, and Bottou]{ArjovskyCB17}
Mart{\'{\i}}n Arjovsky, Soumith Chintala, and L{\'{e}}on Bottou.
\newblock Wasserstein {GAN}.
\newblock \emph{CoRR}, abs/1701.07875, 2017.
\newblock URL \url{http://arxiv.org/abs/1701.07875}.

\bibitem[Arnold \& Neunhoeffer(2020)Arnold and Neunhoeffer]{arnold2020really}
Christian Arnold and Marcel Neunhoeffer.
\newblock Really useful synthetic data--a framework to evaluate the quality of
  differentially private synthetic data.
\newblock \emph{arXiv preprint arXiv:2004.07740}, 2020.

\bibitem[Arora et~al.(2012)Arora, Hazan, and Kale]{MW_survey}
Sanjeev Arora, Elad Hazan, and Satyen Kale.
\newblock The multiplicative weights update method: a meta-algorithm and
  applications.
\newblock \emph{Theory of Computing}, 8\penalty0 (6):\penalty0 121--164, 2012.
\newblock \doi{10.4086/toc.2012.v008a006}.
\newblock URL \url{http://www.theoryofcomputing.org/articles/v008a006}.

\bibitem[Arora et~al.(2017)Arora, Ge, Liang, Ma, and Zhang]{MIXGAN}
Sanjeev Arora, Rong Ge, Yingyu Liang, Tengyu Ma, and Yi~Zhang.
\newblock Generalization and equilibrium in generative adversarial nets
  ({GAN}s).
\newblock In Doina Precup and Yee~Whye Teh (eds.), \emph{Proceedings of the
  34th International Conference on Machine Learning}, volume~70 of
  \emph{Proceedings of Machine Learning Research}, pp.\  224--232,
  International Convention Centre, Sydney, Australia, 06--11 Aug 2017. PMLR.
\newblock URL \url{http://proceedings.mlr.press/v70/arora17a.html}.

\bibitem[Azadi et~al.(2019)Azadi, Olsson, Darrell, Goodfellow, and
  Odena]{AzadiODGO19}
Samaneh Azadi, Catherine Olsson, Trevor Darrell, Ian~J. Goodfellow, and
  Augustus Odena.
\newblock Discriminator rejection sampling.
\newblock In \emph{7th International Conference on Learning Representations,
  {ICLR} 2019, New Orleans, LA, USA, May 6-9, 2019}, 2019.
\newblock URL \url{https://openreview.net/forum?id=S1GkToR5tm}.

\bibitem[Beaulieu-Jones et~al.(2019)Beaulieu-Jones, Wu, Williams, Lee,
  Bhavnani, Byrd, and Greene]{medical}
Brett~K. Beaulieu-Jones, Zhiwei~Steven Wu, Chris Williams, Ran Lee, Sanjeev~P.
  Bhavnani, James~Brian Byrd, and Casey~S. Greene.
\newblock Privacy-preserving generative deep neural networks support clinical
  data sharing.
\newblock \emph{Circulation: Cardiovascular Quality and Outcomes}, 12\penalty0
  (7):\penalty0 e005122, 2019.
\newblock \doi{10.1161/CIRCOUTCOMES.118.005122}.

\bibitem[Breiman(2001)]{breiman2001random}
Leo Breiman.
\newblock Random forests.
\newblock \emph{Machine learning}, 45\penalty0 (1):\penalty0 5--32, 2001.

\bibitem[Chaudhuri \& Vinterbo(2013)Chaudhuri and
  Vinterbo]{chaudhuri2013stability}
Kamalika Chaudhuri and Staal~A Vinterbo.
\newblock A stability-based validation procedure for differentially private
  machine learning.
\newblock In \emph{Advances in Neural Information Processing Systems}, pp.\
  2652--2660, 2013.

\bibitem[Chen \& Guestrin(2016)Chen and Guestrin]{chen2016xgboost}
Tianqi Chen and Carlos Guestrin.
\newblock Xgboost: A scalable tree boosting system.
\newblock In \emph{Proceedings of the 22nd acm sigkdd international conference
  on knowledge discovery and data mining}, pp.\  785--794, 2016.

\bibitem[{Differential Privacy Team, Apple}(2017)]{AppleDP17}
{Differential Privacy Team, Apple}.
\newblock Learning with privacy at scale.
\newblock
  \url{https://machinelearning.apple.com/docs/learning-with-privacy-at-scale/appledifferentialprivacysystem.pdf},
  December 2017.

\bibitem[Ding et~al.(2017)Ding, Kulkarni, and Yekhanin]{DingKY17}
Bolin Ding, Janardhan Kulkarni, and Sergey Yekhanin.
\newblock Collecting telemetry data privately.
\newblock In \emph{Advances in Neural Information Processing Systems 30}, NIPS
  '17, pp.\  3571--3580. Curran Associates, Inc., 2017.

\bibitem[Dwork et~al.(2006)Dwork, McSherry, Nissim, and
  Smith]{dwork2006calibrating}
Cynthia Dwork, Frank McSherry, Kobbi Nissim, and Adam Smith.
\newblock Calibrating noise to sensitivity in private data analysis.
\newblock In \emph{Proceedings of the 3rd Theory of Cryptography Conference},
  volume 3876, pp.\  265--284, 2006.

\bibitem[Erlingsson et~al.(2014)Erlingsson, Pihur, and
  Korolova]{ErlingssonPK14}
{\'U}lfar Erlingsson, Vasyl Pihur, and Aleksandra Korolova.
\newblock {RAPPOR}: Randomized aggregatable privacy-preserving ordinal
  response.
\newblock In \emph{Proceedings of the 2014 ACM Conference on Computer and
  Communications Security}, CCS '14, pp.\  1054--1067, New York, NY, USA, 2014.
  ACM.

\bibitem[Freund \& Schapire(1997)Freund and Schapire]{FS97}
Yoav Freund and Robert~E Schapire.
\newblock A decision-theoretic generalization of on-line learning and an
  application to boosting.
\newblock \emph{Journal of Computer and System Sciences}, 55\penalty0
  (1):\penalty0 119 -- 139, 1997.
\newblock ISSN 0022-0000.
\newblock \doi{https://doi.org/10.1006/jcss.1997.1504}.
\newblock URL
  \url{http://www.sciencedirect.com/science/article/pii/S002200009791504X}.

\bibitem[Frigerio et~al.(2019)Frigerio, de~Oliveira, Gomez, and
  Duverger]{FrigerioOGD19}
Lorenzo Frigerio, Anderson~Santana de~Oliveira, Laurent Gomez, and Patrick
  Duverger.
\newblock Differentially private generative adversarial networks for time
  series, continuous, and discrete open data.
\newblock In \emph{{ICT} Systems Security and Privacy Protection - 34th {IFIP}
  {TC} 11 International Conference, {SEC} 2019, Lisbon, Portugal, June 25-27,
  2019, Proceedings}, pp.\  151--164, 2019.
\newblock \doi{10.1007/978-3-030-22312-0\_11}.
\newblock URL \url{https://doi.org/10.1007/978-3-030-22312-0\_11}.

\bibitem[Goodfellow et~al.(2014)Goodfellow, Pouget-Abadie, Mirza, Xu,
  Warde-Farley, Ozair, Courville, and
  Bengio]{Goodfellow:2014:GAN:2969033.2969125}
Ian~J. Goodfellow, Jean Pouget-Abadie, Mehdi Mirza, Bing Xu, David
  Warde-Farley, Sherjil Ozair, Aaron Courville, and Yoshua Bengio.
\newblock Generative adversarial nets.
\newblock In \emph{Proceedings of the 27th International Conference on Neural
  Information Processing Systems - Volume 2}, NIPS'14, pp.\  2672--2680,
  Cambridge, MA, USA, 2014. MIT Press.
\newblock URL \url{http://dl.acm.org/citation.cfm?id=2969033.2969125}.

\bibitem[Hardt \& Rothblum(2010)Hardt and Rothblum]{HR10}
Moritz Hardt and Guy~N. Rothblum.
\newblock A multiplicative weights mechanism for privacy-preserving data
  analysis.
\newblock In \emph{51th Annual {IEEE} Symposium on Foundations of Computer
  Science, {FOCS} 2010, October 23-26, 2010, Las Vegas, Nevada, {USA}}, pp.\
  61--70, 2010.
\newblock \doi{10.1109/FOCS.2010.85}.
\newblock URL \url{https://doi.org/10.1109/FOCS.2010.85}.

\bibitem[Hardt et~al.(2012)Hardt, Ligett, and McSherry]{HardtLM12}
Moritz Hardt, Katrina Ligett, and Frank McSherry.
\newblock A simple and practical algorithm for differentially private data
  release.
\newblock In \emph{Advances in Neural Information Processing Systems 25: 26th
  Annual Conference on Neural Information Processing Systems 2012. Proceedings
  of a meeting held December 3-6, 2012, Lake Tahoe, Nevada, United States.},
  pp.\  2348--2356, 2012.

\bibitem[Hoang et~al.(2018)Hoang, Nguyen, Le, and Phung]{MGAN}
Quan Hoang, Tu~Dinh Nguyen, Trung Le, and Dinh Phung.
\newblock {MGAN}: Training generative adversarial nets with multiple
  generators.
\newblock In \emph{International Conference on Learning Representations}, 2018.
\newblock URL \url{https://openreview.net/forum?id=rkmu5b0a-}.

\bibitem[Jang et~al.(2016)Jang, Gu, and Poole]{jang2016categorical}
Eric Jang, Shixiang Gu, and Ben Poole.
\newblock Categorical reparameterization with gumbel-softmax.
\newblock \emph{arXiv preprint arXiv:1611.01144}, 2016.

\bibitem[Kivinen \& Warmuth(1997)Kivinen and Warmuth]{KIVINEN19971}
Jyrki Kivinen and Manfred~K. Warmuth.
\newblock Exponentiated gradient versus gradient descent for linear predictors.
\newblock \emph{Information and Computation}, 132\penalty0 (1):\penalty0 1 --
  63, 1997.
\newblock ISSN 0890-5401.
\newblock \doi{https://doi.org/10.1006/inco.1996.2612}.
\newblock URL
  \url{http://www.sciencedirect.com/science/article/pii/S0890540196926127}.

\bibitem[Liu \& Talwar(2019)Liu and Talwar]{liu2019private}
Jingcheng Liu and Kunal Talwar.
\newblock Private selection from private candidates.
\newblock In \emph{Proceedings of the 51st Annual ACM SIGACT Symposium on
  Theory of Computing}, pp.\  298--309, 2019.

\bibitem[Maddison et~al.(2016)Maddison, Mnih, and Teh]{maddison2016concrete}
Chris~J Maddison, Andriy Mnih, and Yee~Whye Teh.
\newblock The concrete distribution: A continuous relaxation of discrete random
  variables.
\newblock \emph{arXiv preprint arXiv:1611.00712}, 2016.

\bibitem[McSherry \& Talwar(2007)McSherry and Talwar]{MT}
Frank McSherry and Kunal Talwar.
\newblock Mechanism design via differential privacy.
\newblock In \emph{Proceedings of the 48th Annual IEEE Symposium on Foundations
  of Computer Science}, FOCS '07, pp.\  94--103, Washington, DC, USA, 2007.
  IEEE Computer Society.
\newblock ISBN 0-7695-3010-9.
\newblock \doi{10.1109/FOCS.2007.41}.
\newblock URL \url{http://dx.doi.org/10.1109/FOCS.2007.41}.

\bibitem[Mironov(2017)]{Mironov17}
Ilya Mironov.
\newblock R{\'{e}}nyi differential privacy.
\newblock In \emph{30th {IEEE} Computer Security Foundations Symposium, {CSF}
  2017, Santa Barbara, CA, USA, August 21-25, 2017}, pp.\  263--275, 2017.
\newblock \doi{10.1109/CSF.2017.11}.
\newblock URL \url{https://doi.org/10.1109/CSF.2017.11}.

\bibitem[Papernot et~al.(2018)Papernot, Song, Mironov, Raghunathan, Talwar, and
  Erlingsson]{pate}
Nicolas Papernot, Shuang Song, Ilya Mironov, Ananth Raghunathan, Kunal Talwar,
  and {\'{U}}lfar Erlingsson.
\newblock Scalable private learning with {PATE}.
\newblock In \emph{6th International Conference on Learning Representations,
  {ICLR} 2018, Vancouver, BC, Canada, April 30 - May 3, 2018, Conference Track
  Proceedings}, 2018.
\newblock URL \url{https://openreview.net/forum?id=rkZB1XbRZ}.

\bibitem[Radford et~al.(2015)Radford, Metz, and
  Chintala]{radford2015unsupervised}
Alec Radford, Luke Metz, and Soumith Chintala.
\newblock Unsupervised representation learning with deep convolutional
  generative adversarial networks.
\newblock \emph{arXiv preprint arXiv:1511.06434}, 2015.

\bibitem[Ruggles et~al.(2019)Ruggles, Flood, Goeken, Grover, Meyer, Pacas, and
  Sobek]{ruggles2019ipums}
Steven Ruggles, Sarah Flood, Ronald Goeken, Josiah Grover, Erin Meyer, Jose
  Pacas, and Matthew Sobek.
\newblock Ipums usa: Version 9.0 [dataset].
\newblock \emph{Minneapolis, MN: IPUMS}, 10:\penalty0 D010, 2019.
\newblock \doi{10.18128/D010.V9.0}.

\bibitem[Salimans et~al.(2016)Salimans, Goodfellow, Zaremba, Cheung, Radford,
  and Chen]{salimans2016improved}
Tim Salimans, Ian Goodfellow, Wojciech Zaremba, Vicki Cheung, Alec Radford, and
  Xi~Chen.
\newblock Improved techniques for training gans.
\newblock In \emph{Advances in neural information processing systems}, pp.\
  2234--2242, 2016.

\bibitem[Snoke et~al.(2018)Snoke, Raab, Nowok, Dibben, and
  Slavkovic]{Snoke2018}
Joshua Snoke, Gillian~M. Raab, Beata Nowok, Chris Dibben, and Aleksandra
  Slavkovic.
\newblock General and specific utility measures for synthetic data.
\newblock \emph{Journal of the Royal Statistical Society: Series A (Statistics
  in Society)}, 181\penalty0 (3):\penalty0 663--688, 2018.
\newblock \doi{10.1111/rssa.12358}.
\newblock URL
  \url{https://rss.onlinelibrary.wiley.com/doi/abs/10.1111/rssa.12358}.

\bibitem[Song \& Ermon(2020)Song and Ermon]{song2020bridging}
Jiaming Song and Stefano Ermon.
\newblock Bridging the gap between $f$-gans and wasserstein gans, 2020.

\bibitem[Srivastava et~al.(2017)Srivastava, Valkov, Russell, Gutmann, and
  Sutton]{srivastava2017veegan}
Akash Srivastava, Lazar Valkov, Chris Russell, Michael~U Gutmann, and Charles
  Sutton.
\newblock Veegan: Reducing mode collapse in gans using implicit variational
  learning.
\newblock In \emph{Advances in Neural Information Processing Systems}, pp.\
  3308--3318, 2017.

\bibitem[Stekhoven \& B{\"u}hlmann(2012)Stekhoven and
  B{\"u}hlmann]{stekhoven2012missforest}
Daniel~J Stekhoven and Peter B{\"u}hlmann.
\newblock Missforest—non-parametric missing value imputation for mixed-type
  data.
\newblock \emph{Bioinformatics}, 28\penalty0 (1):\penalty0 112--118, 2012.

\bibitem[Tolstikhin et~al.(2017)Tolstikhin, Gelly, Bousquet, Simon-Gabriel, and
  Sch\"{o}lkopf]{AdaGAN}
Ilya Tolstikhin, Sylvain Gelly, Olivier Bousquet, Carl-Johann Simon-Gabriel,
  and Bernhard Sch\"{o}lkopf.
\newblock Adagan: Boosting generative models.
\newblock In \emph{Proceedings of the 31st International Conference on Neural
  Information Processing Systems}, NIPS'17, pp.\  5430--5439, USA, 2017. Curran
  Associates Inc.
\newblock ISBN 978-1-5108-6096-4.
\newblock URL \url{http://dl.acm.org/citation.cfm?id=3295222.3295294}.

\bibitem[Torkzadehmahani et~al.(2020)Torkzadehmahani, Kairouz, and
  Paten]{TKP20}
Reihaneh Torkzadehmahani, Peter Kairouz, and Benedict Paten.
\newblock {DP-CGAN:} differentially private synthetic data and label
  generation.
\newblock \emph{CoRR}, abs/2001.09700, 2020.
\newblock URL \url{https://arxiv.org/abs/2001.09700}.

\bibitem[Turner et~al.(2019)Turner, Hung, Frank, Saatchi, and Yosinski]{mhgan}
Ryan Turner, Jane Hung, Eric Frank, Yunus Saatchi, and Jason Yosinski.
\newblock {M}etropolis-{H}astings generative adversarial networks.
\newblock In Kamalika Chaudhuri and Ruslan Salakhutdinov (eds.),
  \emph{Proceedings of the 36th International Conference on Machine Learning},
  volume~97 of \emph{Proceedings of Machine Learning Research}, pp.\
  6345--6353, Long Beach, California, USA, 09--15 Jun 2019. PMLR.
\newblock URL \url{http://proceedings.mlr.press/v97/turner19a.html}.

\bibitem[Xie et~al.(2018)Xie, Lin, Wang, Wang, and Zhou]{rmsprop_DPGAN}
Liyang Xie, Kaixiang Lin, Shu Wang, Fei Wang, and Jiayu Zhou.
\newblock Differentially private generative adversarial network.
\newblock \emph{CoRR}, abs/1802.06739, 2018.
\newblock URL \url{http://arxiv.org/abs/1802.06739}.

\bibitem[Yoon et~al.(2019)Yoon, Jordon, and van~der Schaar]{yoon2018pategan}
Jinsung Yoon, James Jordon, and Mihaela van~der Schaar.
\newblock {PATE}-{GAN}: Generating synthetic data with differential privacy
  guarantees.
\newblock In \emph{International Conference on Learning Representations}, 2019.
\newblock URL \url{https://openreview.net/forum?id=S1zk9iRqF7}.

\end{thebibliography}
\bibliographystyle{iclr2021_conference}

\newpage
\appendix
\section{Proofs}
\subsection{Proof of Theorem~\ref{support}}
\begin{proof}[Proof of Theorem~\ref{support}]
Note that if the synthetic data player plays the distribution over $X$, then $U(p_X, D) = \E_{x\sim p_X}[D(x')] + \E_{x\sim \phi}[1 - D(x)] = 1$ for any discriminator $D\in \mathcal{D}$. Now let us replace each element in $X$ with its $\gamma$-approximation in $B$ and obtain a new dataset $X_B$, and let $p_{X_B}$ denote the empirical distribution over $X_B$. By the Lipschitz conditions, we then have $|U(p_X, D) - U(p_{X_B}, D)| \leq L \, \gamma$.  This means $U(p_{X_B}, D) \in [1 - L\gamma, 1+L\gamma]$ for all $D$. Also, for all $\phi\in \Delta(B)$, we have $U(\phi, D^{1/2}) = 1$. Thus, $(p_{X_b}, D^{1/2})$ satisfies \eqref{1} with $\alpha = L \gamma$.
\end{proof}

\subsection{Proof of the Approximate Equilibrium}
\begin{proof}
We will use the seminal result of \cite{FS97}, which shows that if the two players have low regret in the dynamics, then their average plays form an approximate equilibrium. First, we will bound the regret from the data player.
The regret guarantee of the multiplicative weights algorithm (see e.g.~Theorem 2.3 of \cite{MW_survey}) gives
\begin{equation}
\label{hey}
\sum_{t=1}^T U(\phi^t, D^t) - \min_{b\in B} \sum_{t=1}^T U(b, D^t) \leq 4 \eta T + \frac{\log |B|}{\eta}
\end{equation}

Next, we bound the regret of the distinguisher using the accuracy guarantee of the exponential mechanism~\citep{MT}. For each $t$, we know with probability 
$(1 - \beta/T)$, 
\[
 \max_{D_j} U(\phi^t , D_j) - U(\phi^t , D^t) \leq \frac{2 \log(N T/\beta)}{n \eps_0}
\]
Taking a union bound, we have this accuracy guarantee holds for all $t$, and so
\begin{equation}\label{man}
 \max_{D_j} \sum_{t=1}^T U(\phi^t , D_j) - \sum_{t=1}^T U(\phi^t , D^t) \leq \frac{2 T \log(N T/\beta)}{n \eps_0}
\end{equation}

Then following the result of \cite{FS97}, their average plays $(\overline D, \overline \phi)$ is an $\alpha$-approximate equilibrium with 
\[
\alpha =4 \eta + \frac{\log |B|}{\eta T} + \frac{2\log(NT/\beta)}{n\eps_0}
\]
Plugging in the choices of $T$ and $\eta$ gives
the stated bound.
\end{proof}

\section{Additional Details on the quality evaluation}

\subsection{On the calculation of the pMSE.} 
To calculate the pMSE one trains a discriminator to distinguish between real and synthetic examples. The predicted probability of being classified as real or synthetic is the propensity score. Taking all propensity scores into account the mean squared error between the propensity scores and the proportion of real data examples is calculated. A synthetic dataset has high general utility, if the model can at best predict probabilities of 0.5 for both real and synthetic examples, then the pMSE would be 0.

\subsection{Specific utility measure for the 25 Gaussians.} 
In the real data, given the data generating process outlined in section 4.1, at each of the 25 modes 99\% of the observations lie within a circle with radius $r = \sqrt{0.0025 \cdot 9.21034} $ around the mode centroids, where $9.21034$ is the critical value at $p = 0.99$ of a $\chi^2$ distribution with 2 degrees of freedom, and $0.0025$ is the variance of the spherical gaussian. 

To calculate the quality score we count the number of observations within each of these 25 circles. If one of the modes contains more points than we would expect given the true distribution the count is capped accordingly. Our quality score for the toy dataset of 25 gaussians can be expressed as $Q = \sum_i^{25} (\min(p_{real}^i\cdot N_{syn}, N_{syn}^i)/N_{syn}) $, where $i$ indexes the clusters, $p_{real}$ is the true distribution of points per cluster, $N_{syn}^i$ the number of observations at a cluster within radius $r$, and $N_{syn}$ the total number of synthetic examples.

\iffalse
\begin{table}[htbp]
\centering
%\resizebox{0.9\textwidth}{!}{\begin{minipage}{\textwidth}
\caption{Quality of Synthetic Data for the toy dataset of 25 Gaussians. Without differential privacy and with differential privacy ($\epsilon = 1$, $\delta = 0.00002$).}
\label{tab1:}
\begin{tabular}{lrrrr}
\toprule
  & GAN & DRS  & PGB  & PGB \\
  &     &       &       &+DRS \\
\midrule
pMSE Ratio & 11.221 & 10.856 & 9.780 & \textbf{6.862}\\

pMSE Ratio DP & 12.488 & 13.026 & \textbf{10.745} & 10.952\\
\addlinespace
Quality & 0.055 & 0.147 & 0.164 & \textbf{0.252}\\

Quality DP & 0.008 & 0.019 & 0.032 & \textbf{0.035} \\
\bottomrule
\end{tabular}
%\end{minipage}}
\end{table}
\fi

\section{GAN architectures}

\subsection{Details on the experiments with the 25 Gaussians.}

The generator and discriminator are neural nets with two fully connected hidden layers (Discriminator: 128, 256; Generator: 512, 256) with Leaky ReLu activations. The latent noise vector Z is of dimension 2 and independently sampled from a gaussian distribution with mean 0 and standard deviation of 1. For GAN training we use the KL-WGAN loss \citep{song2020bridging}. Before passing the Discriminator scores to PGB we transform them to probabilities using a sigmoid activation.

%\subsection{MNIST DCGAN}

\subsection{GAN architecture for the 1940 American Census data.}
The GAN networks consist of two fully connected hidden layers (256, 128) with Leaky ReLu activation functions. To sample from categorical attributes we apply the Gumbel-Softmax trick \citep{maddison2016concrete, jang2016categorical} to the output layer of the Generator. We run our PGB algorithm  over the last 150 stored Generators and Discriminators and train it for $T = 400$ update steps. 

\section{Private Synthetic 2010 American Decennial Census Samples.}
We conducted further experiments on more recent Census files. The 2010 data is similar to the data that the American Census is collecting for the 2020 decennial Census. For this experiment, we synthesize a 10\% sample for California with 3,723,669 observations of 5 attributes (gender, age, Hispanic origin, race and puma district membership). Our final value of $\epsilon$ is $0.795$ and $\delta$ is $\frac{1}{2N} \approx 1.34\times10^{-7}$ (for PGB the GAN training contributes $\epsilon = 0.786 $ and PGB $\epsilon = 0.09$). The pMSE ratio scores are $1.934$ (DP GAN), $1.889$ (DP DRS), $1.609$ (DP PGB) and $ 1.485$ (DP PGB+DRS), here PGB achieves the best general utility. For specific utility, we compare the accuracy of three-way marginals on the synthetic data to the proportions in the true data.\footnote{A task that is similar to the tables released by the Census.} We tabulate race (11 answer categories in the 2010 Census) by Hispanic origin (25 answer categories in the 2010 Census) by gender (2 answer categories in the 2010 Census) giving us a total of 550 cells. To assess the specific utility for these three-way marginals we calculate the average accuracy across all 550 cells. Compared to the true data DP GAN achieves $99.82\%$, DP DRS $99.89\%$, DP PGB $99.89\%$ and the combination of DP PGB and DRS $99.93\%$. Besides the average accuracy across all 550 cells another interesting metric of specific utility is the number of cells in which each synthesizer achieves the highest accuracy compared to the other methods, this is the case 43 times for DP GAN, 30 times for DP DRS, 90 times for DP PGB and 387 times for DP PGB+DRS. Again, this shows that our proposed approach can improve the utility of private synthetic data.

\section{Detailed Results of Machine Learning Prediction with Synthetic Data}

Table \ref{tab2:} summarizes the results for the machine learning prediction experiment with the Titanic data. We present the accuracy, ROC AUC and PR AUC to evaluate the performance. It can be seen that the models trained on synthetic data generated with our approach consistently perform better than models trained on synthetic data from the last generator or DRS -- with or without privacy. To put these values into perspective, the models trained on the real training data and tested on the same out-of-sample data achieve the scores in table \ref{tab3:}.

\begin{table}[ht!]
\centering
\caption{Predicting Titanic Survivors with Machine Learning Models trained on synthetic data and tested on real out-of-sample data. Median scores of 20 repetitions with independently generated synthetic data. With differential privacy $\epsilon$ is $2$ and $\delta$ is $\frac{1}{2N} \approx 5.6\times10^{-4}$.}
\label{tab2:}
\begin{tabular}{lrrrr}
\toprule
  & GAN & DRS & PGB & PGB\\
  & & & & + DRS \\
\midrule
Logit Accuracy & 0.626 & 0.746 & 0.701 & \textbf{0.765}\\
Logit ROC AUC & 0.591 & 0.760 & 0.726 & \textbf{0.792}\\
Logit PR AUC & 0.483 & 0.686 & 0.655 & \textbf{0.748}\\
\addlinespace
RF Accuracy & 0.594 & 0.724 & 0.719 & \textbf{0.742}\\
RF ROC AUC & 0.531 & 0.744 & 0.741 & \textbf{0.771}\\
RF PR AUC & 0.425 & 0.701 & 0.706 & \textbf{0.743}\\
\addlinespace
XGBoost Accuracy & 0.547 & 0.724 & 0.683 & \textbf{0.740}\\
XGBoost ROC AUC & 0.503 & 0.732 & 0.681 & \textbf{0.772}\\
XGBoost PR AUC & 0.400 & 0.689 & 0.611 & \textbf{0.732}\\
\addlinespace
\midrule
& DP  & DP  & DP  & DP PGB\\
  &  GAN   &  DRS   &  PGB   &+DRS\\
  \midrule
Logit Accuracy & 0.566 & 0.577 & 0.640 & \textbf{0.649}\\
Logit ROC AUC & 0.477 & 0.568 & 0.621 & \textbf{0.624}\\
Logit PR AUC & 0.407 & 0.482 & 0.532 & \textbf{0.547}\\
\addlinespace
RF Accuracy & 0.487 & 0.459 & 0.481 & \textbf{0.628}\\
RF ROC AUC ROC AUC & 0.512 & 0.553 & 0.558 & \textbf{0.652}\\
RF PR AUC PR AUC & 0.407 & 0.442 & 0.425 & \textbf{0.535}\\
\addlinespace
XGBoost Accuracy & 0.577 & 0.589 & 0.609 & \textbf{0.641}\\
XGBoost ROC AUC & 0.530 & 0.586 & \textbf{0.619} & 0.596\\
XGBoost PR AUC & 0.398 & 0.479 & 0.488 & \textbf{0.526}\\
\bottomrule
\end{tabular}
\end{table}

\begin{table}[ht!]
\centering
\caption{Predicting Titanic Survivors with Machine Learning Models trained on real data and tested on real out-of-sample data.}
\label{tab3:}

\begin{tabular}{lr}
\toprule
Model & Score\\
\midrule
Logit Accuracy & 0.764\\
Logit ROC AUC  & 0.813\\
Logit PR AUC  & 0.785\\
\addlinespace
RF Accuracy  & 0.768\\
RF ROC AUC  & 0.809\\
RF PR AUC  & 0.767\\
\addlinespace
XGBoost Accuracy  & 0.768\\
XGBoost ROC AUC  & 0.773\\
XGBoost PR AUC  & 0.718\\
\bottomrule
\end{tabular}
\end{table}

\section{Synthetic MNIST Samples}

Figure \ref{fig:un_last_g} shows uncurated samples from the last Generator after 30 epochs of training without differential privacy in Panel \ref{fig:last_g} and with differential privacy ($\epsilon = $, $\delta =$) in Panel \ref{fig:dp_last_g}. Figure \ref{fig:un_drs} shows uncurated samples with DRS on the last Generator. Figure \ref{fig:un_pgb} shows uncurated samples after PGB and Figure \ref{fig:un_drs_pgb} shows uncurated samples after the combination of PGB and DRS. In Figure \ref{fig:top100_pgb} we show the 100 samples with the highest PGB probabilities.

\iffalse
\begin{figure}[!h]
  \centering \includegraphics[width = .33\columnwidth]{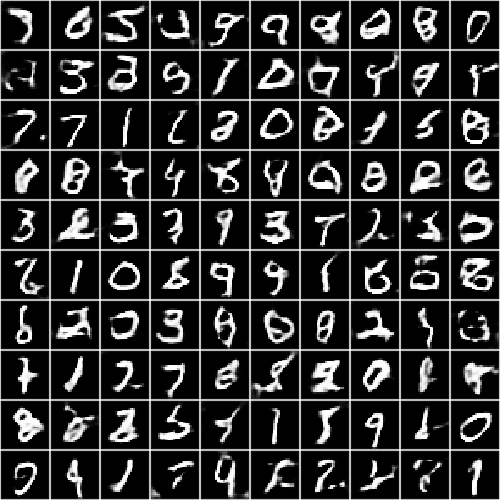}
  \caption{Uncurated MNIST samples from last GAN Generator after 30 epochs.}
  \label{mnist_last_g}
\end{figure}

\begin{figure}[!h]
  \centering \includegraphics[width = .33\columnwidth]{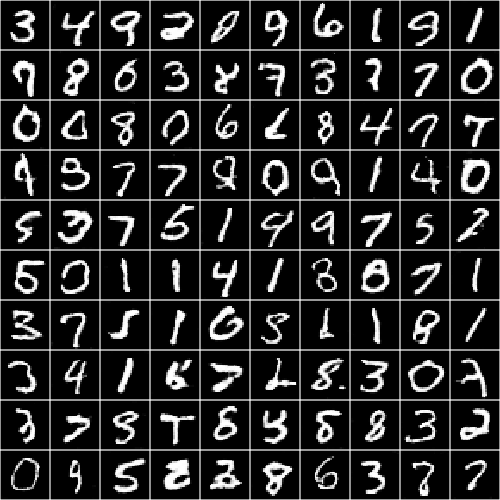}
  \caption{Uncurated MNIST samples after PGB.}
  \label{mnist_pgb}
\end{figure}

\begin{figure}[!h]
  \centering \includegraphics[width = .33\columnwidth]{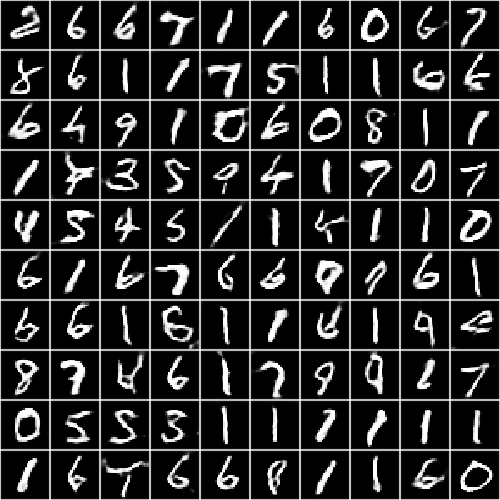}
  \caption{Top 100 MNIST samples after PGB.}
  \label{mnist_pgb_100}
\end{figure}
\fi

\begin{figure}
\centering
\begin{subfigure}{.5\textwidth}
  \centering
  \includegraphics[width=.7\linewidth]{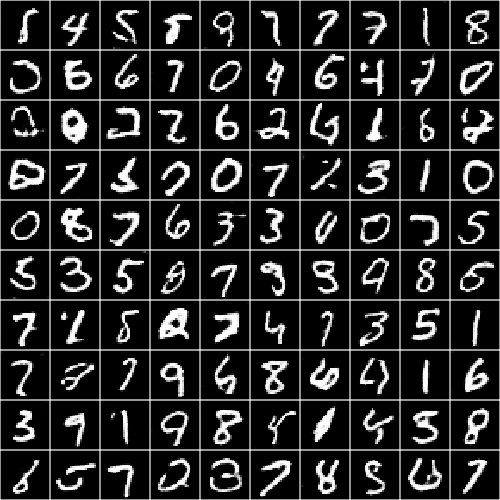}
  \caption{Without Differential Privacy.}
  \label{fig:last_g}
\end{subfigure}%
\begin{subfigure}{.5\textwidth}
  \centering
  \includegraphics[width=.7\linewidth]{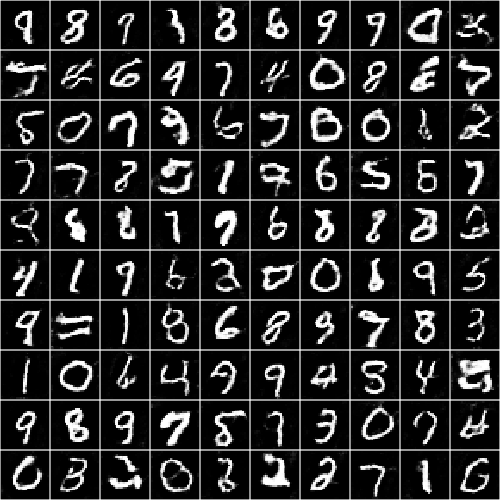}
  \caption{With Differential Privacy ($\epsilon = 10$, $\delta = \frac{1}{2N}$)}
  \label{fig:dp_last_g}
\end{subfigure}
\caption{Uncurated MNIST samples from last GAN Generator after 30 epochs.}
\label{fig:un_last_g}
\end{figure}

\begin{figure}
\centering
\begin{subfigure}{.5\textwidth}
  \centering
  \includegraphics[width=.7\linewidth]{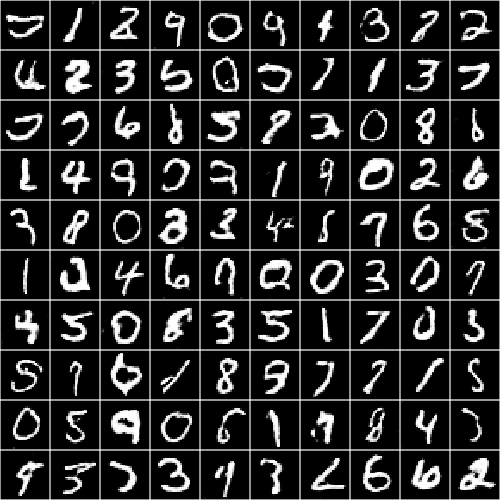}
  \caption{Without Differential Privacy.}
  \label{fig:drs}
\end{subfigure}%
\begin{subfigure}{.5\textwidth}
  \centering
  \includegraphics[width=.7\linewidth]{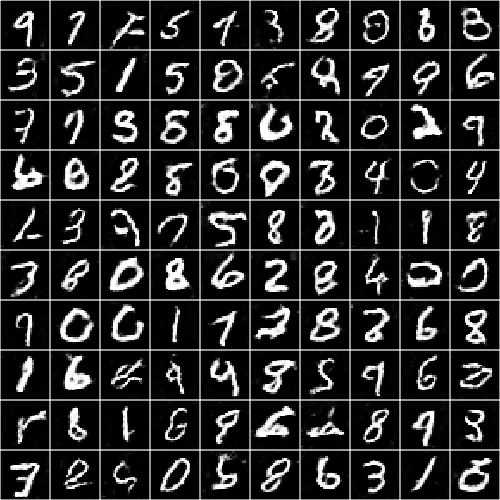}
  \caption{With Differential Privacy ($\epsilon = 10$, $\delta = \frac{1}{2N}$)}
  \label{fig:dp_drs}
\end{subfigure}
\caption{Uncurated MNIST samples with DRS on last Generator after 30 epochs.}
\label{fig:un_drs}
\end{figure}

\begin{figure}
\centering
\begin{subfigure}{.5\textwidth}
  \centering
  \includegraphics[width=.7\linewidth]{figures/pgb_uncurated_mnist.png}
  \caption{Without Differential Privacy.}
  \label{fig:pgb}
\end{subfigure}%
\begin{subfigure}{.5\textwidth}
  \centering
  \includegraphics[width=.7\linewidth]{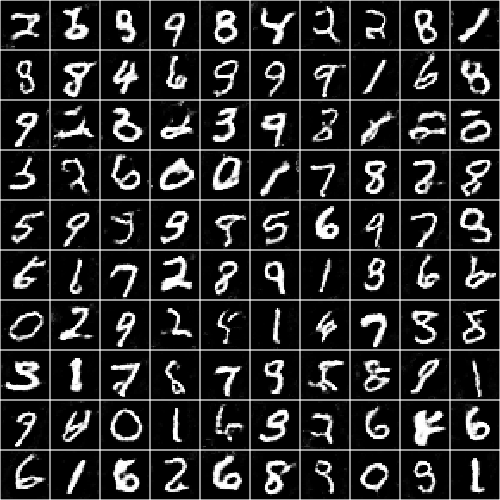}
  \caption{With Differential Privacy ($\epsilon = 10$, $\delta = \frac{1}{2N}$)}
  \label{fig:dp_pgb}
\end{subfigure}
\caption{Uncurated MNIST samples with PGB after 30 epochs (without DP) and 25 epochs (with DP).}
\label{fig:un_pgb}
\end{figure}

\begin{figure}
\centering
\begin{subfigure}{.5\textwidth}
  \centering
  \includegraphics[width=.7\linewidth]{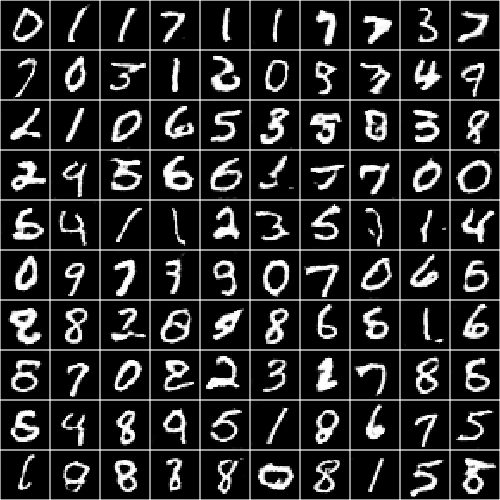}
  \caption{Without Differential Privacy.}
  \label{fig:pgb_drs}
\end{subfigure}%
\begin{subfigure}{.5\textwidth}
  \centering
  \includegraphics[width=.7\linewidth]{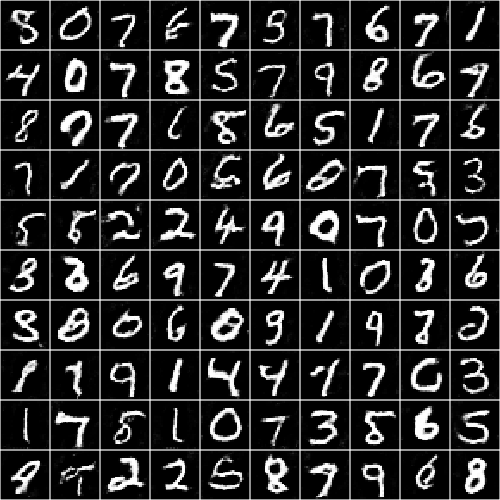}
  \caption{With Differential Privacy ($\epsilon = 10$, $\delta = \frac{1}{2N}$)}
  \label{fig:dp_pgb_drs}
\end{subfigure}
\caption{Uncurated MNIST samples with PGB and DRS after 30 epochs (without DP) and 25 epochs (with DP).}
\label{fig:un_drs_pgb}
\end{figure}

\begin{figure}
\centering
\begin{subfigure}{.5\textwidth}
  \centering
  \includegraphics[width=.7\linewidth]{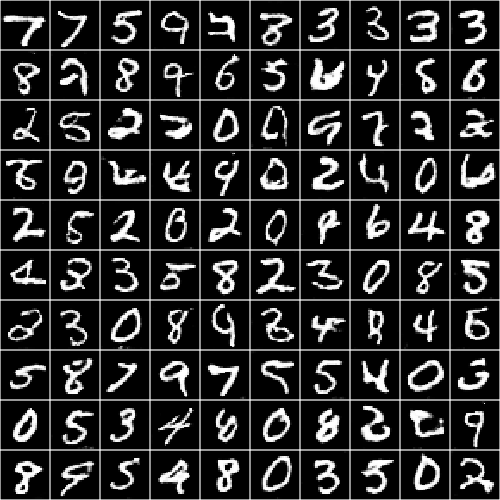}
  \caption{Without Differential Privacy.}
  \label{fig:top_pgb}
\end{subfigure}%
\begin{subfigure}{.5\textwidth}
  \centering
  \includegraphics[width=.7\linewidth]{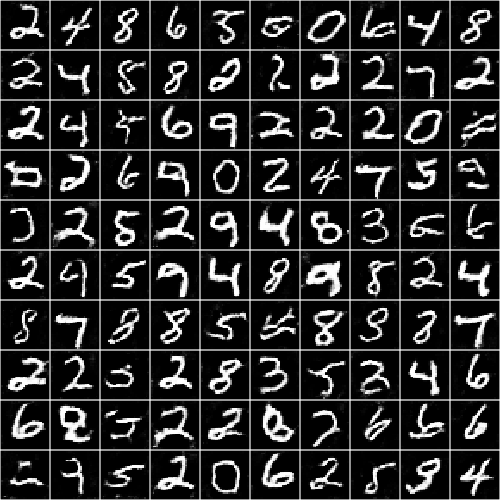}
  \caption{With Differential Privacy ($\epsilon = 10$, $\delta = \frac{1}{2N}$)}
  \label{fig:top_dp_pgb}
\end{subfigure}
\caption{Top 100 MNIST samples after PGB after 30 epochs (without DP) and 25 epochs (with DP).}
\label{fig:top100_pgb}
\end{figure}

\end{document}